 \author{Yash Deshpande\thanks{Department of Mathematics, Massachusetts Institute
 of Technology and Microsoft Research} \;\and \; Lester Mackey\thanks{Microsoft Research} \; \and\;
 Vasilis Syrgkanis\thanks{Microsoft Research} \;\and \; Matt Taddy\thanks{Chicago Booth and Microsoft Research}} 
\icmltitlerunning{Accurate Inference for Adaptive Linear Models}
\author{Yash Deshpande\thanks{Institute for Data, Systems and Society, Massachusetts Institute
 of Technology} \;\and \; Lester Mackey\thanks{Microsoft Research} \; \and\;
 Vasilis Syrgkanis\thanks{Microsoft Research} \;\and \; Matt Taddy\thanks{Chicago Booth and Microsoft Research}}
\newtheorem{theorem}{Theorem}
\newtheorem{lemma}[theorem]{Lemma} 
\newtheorem{proposition}[theorem]{Proposition} 
\newtheorem{corollary}[theorem]{Corollary}
\newtheorem{definition}{Definition}
\newtheorem{assumption}{Assumption}
\def\<{\langle}
\def\>{\rangle}
\def\mubar{{\bar\mu}}
\def\lambdamin{{\lambda_{\min}}}
\def\im{{\mathrm{i}}}
\def\eps{{\varepsilon}}
\def\id{{\rm I}}
\def\sT{{\sf T}}
\def\sb{{\sf b}}
\def\sv{{\sf v}}
\def\P{{\mathbb P}}
\def\E{{\mathbb E}} %
\def\reals{\mathbb{R}}
\def\normal{{\sf N}}
\def\Unif{{\sf Unif}}
\def\Tr{{\sf {Tr}}}
\def\cG{\mathcal{G}}
\def\Var{{\sf Var}}
\def\ind{\mathbb{I}}
\newcommand\norm[1]{\left\lVert{#1}\right\rVert}
\newcommand\abs[1]{\left\lvert{#1}\right\rvert}
\def\cX{{\cal X}}
\def\cF{{\cal F}}
\def\mle{{\, \preccurlyeq \,}}
\def\mge{{\, \succcurlyeq \,}}
\def\Var{{\rm Var}}
\def\convD{{\,\stackrel{\mathrm{d}}{\Rightarrow} \,}}
\def\ols{{\sf OLS}}
\def\betahat{{\widehat{\beta}}}
\def\bx{{\boldsymbol x}}
\def\bv{{\boldsymbol v}}
\def\bX{{\boldsymbol X}}
\DeclareMathAlphabet{\mathpzc}{OT1}{pzc}{m}{it}
\newcommand{\vect}[1]{\boldsymbol{#1}}
\newenvironment{talign*}
 {\csname align*\endcsname}
 {\endalign}
\newenvironment{talign}
 {\csname align\endcsname}
 {\endalign}
\newcommand{\twonorm}[1]{\norm{#1}_2} %
\def\staticnorm#1{\|{#1}\|} %
\newcommand{\statictwonorm}[1]{\staticnorm{#1}_2} %
\newcommand{\opnorm}[1]{\norm{#1}_{op}} %
\crefname{equation}{}{}
\title{Accurate Inference in Adaptive Linear Models}
\begin{document}
\ifboolexpr{togl{neuripscausal}}{
  \maketitle
}{}

\ifboolexpr{togl{arxiv}}{
\maketitle
}{}

\iftoggle{icml}{

\twocolumn[
\icmltitle{Accurate Inference for Adaptive Linear Models}

\begin{icmlauthorlist}
\icmlauthor{Yash Deshpande}{mit}
\icmlauthor{Lester Mackey}{msr}
\icmlauthor{Vasilis Syrgkanis}{msr}
\icmlauthor{Matt Taddy}{msr,booth}
\end{icmlauthorlist}

\icmlaffiliation{mit}{Department of Mathematics, MIT}
\icmlaffiliation{msr}{Microsoft Research New England}
\icmlaffiliation{booth}{Booth School of Business, University of Chicago}

\icmlcorrespondingauthor{Yash Deshpande}{yash@mit.edu}

\icmlkeywords{Machine Learning, ICML}

\vskip 0.3in
]
\printAffiliationsAndNotice{} 
}{}

\begin{abstract}
Estimators computed from adaptively collected data do not behave like their non-adaptive brethren.
Rather, the sequential dependence of the collection policy can lead to severe distributional biases that persist even in the infinite data limit.
We develop a general method 
-- \emph{$\vect{W}$-decorrelation} -- for transforming the bias of adaptive linear regression estimators into variance.
The method uses only coarse-grained information about the data collection policy and does not need access to propensity scores or exact knowledge of the policy.
We bound the finite-sample bias and variance of the $\vect{W}$-estimator and develop asymptotically correct confidence intervals based on a novel martingale central limit theorem. 
We then demonstrate the empirical benefits of the generic $\vect{W}$-decorrelation procedure in two different adaptive data settings: the multi-armed bandit and the autoregressive time series.
\end{abstract}

\section{Introduction}
Consider a dataset of $n$ sample points $(y_i, \vect{x}_i)_{i\le n}$
where $y_i$ represents an observed outcome and $\vect{x}_i \in \reals^p$ an associated vector of covariates. 
In the standard linear model, the outcomes 
and covariates are related through a parameter $\beta$:
\begin{align}\label{eq:exampleLinearModel}
y_i &= \< \vect{x}_i, \beta\> + \eps_i.
\end{align}
In this model, the `noise' term $\eps_i$ represents inherent variation in the sample, 
or the variation that is not captured in the model. Parametric
models 
of the type \cref{eq:exampleLinearModel} are a fundamental building block in many
machine learning problems. A common additional
assumption is that the covariate vector $\vect{x}_i$ for a given datapoint $i$ is independent of 
the other sample point outcomes $(y_j)_{j\ne i}$ and the inherent variation $(\eps_j)_{j\in [n]}$. 
This paper is motivated by experiments where the
sample $(y_i, \vect{x}_i)_{i\le n}$ is not completely randomized but rather \emph{adaptively} chosen. By adaptive, 
we mean that the choice of the data point $(y_i, \vect{x}_i)$ is
guided from inferences on past data $(y_j, \vect{x}_j)_{j < i}$. Consider the following sequential paradigms:

\begin{enumerate}
\item Multi-armed bandits: This class of sequential decision making problems captures the classical `exploration versus exploitation' tradeoff. At each time $i$, the experimenter
chooses an `action' $\vect{x}_i$ from a set of available actions $\cX$ and accrues a reward $R(y_i)$ where $(y_i, \vect{x}_i)$ follow the model \cref{eq:exampleLinearModel}. Here the
experimenter must balance the conflicting goals of learning about the underlying model (i.e., $\beta$) for better future rewards, while still accruing reward in the current 
time step. 
\item Active learning: Acquiring labels  $y_i$ is potentially
costly, and the experimenter aims to learn with as few outcomes as possible. 
At time $i$, based on prior data $(y_j, \vect{x}_j)_{j\le i-1}$ the experimenter 
chooses a new data point $\vect{x}_i$ to label based on its value in learning.
\item Time series analysis: Here, the data points $(y_i, \vect{x}_i)$ are naturally ordered in time, with $(y_i)_{i\le n}$ denoting a time series
and the covariates $\vect{x}_i$ include observations from the prior time points. 
\end{enumerate}

Here, time induces a natural sequential dependence across the samples.
In the first two instances, the actions or policy of the experimenter 
are responsible for creating such dependence. In the case of time 
series data, this dependence is endogenous and a consequence of the modeling. 
A common feature, however, is that the choice of the {design} 
or sequence $(\vect{x}_i)_{i\le n}$ is typically not
made for inference on the model after the data collection is completed. 
This does not, of course, imply that accurate estimates on the parameters
$\beta$ cannot be made from the data. Indeed, 
it is often the case that the sample is informative enough to extract 
consistent estimators of the underlying parameters. Indeed, this is often 
crucial to the success of the experimenter's policy.
For instance, `regret' in sequential decision-making
or risk in active learning are intimately connected with the accurate estimation
of the underlying parameters \citep{castro2008minimax,audibert2009minimax,bubeck2012regret,rusmevichientong2010linearly} . Our motivation is the natural
follow-up question of accurate \emph{ex post} inference in the standard statistical sense:
\begin{center}
\begin{quote}
Can adaptive data be used to compute accurate confidence regions and $p$-values?
\end{quote}
\end{center}

As we will see, the key challenge is
that even in the simple linear model of \cref{eq:exampleLinearModel}, the distribution of classical estimators can differ from the predicted central limit behavior of non-adaptive designs. In
this context we make the following contributions:

\begin{itemize}
\item \textbf{Decorrelated estimators:} We present a 
general method to decorrelate arbitrary estimators $\betahat(\vect{y}, \vect{X}_n)$ constructed from the data. This construction admits a simple decomposition
into a `bias' and `variance' term. In comparison
with competing methods, like propensity weighting, our proposal
requires little explicit information about the data-collection policy.  

\item \textbf{Bias and variance control:} Under a natural exploration 
condition on the data collection policy, we establish that the bias
and variance can be controlled at nearly optimal levels. In the
multi-armed bandit setting, we prove this under an especially weak averaged
exploration condition.  

\item \textbf{Asymptotic normality and inference:} We establish a martingale
central limit theorem (CLT) under a moment stability assumption. 
Applied to our decorrelated estimators, this allows 
us to construct confidence intervals and conduct hypothesis tests in the 
usual fashion. 

\item \textbf{Validation:} We demonstrate the usefulness of
the decorrelating construction in two different scenarios: 
multi-armed bandits (MAB) and autoregressive (AR) time series. We observe that our decorrelated estimators
retain expected central limit behavior in 
regimes where the standard estimators do not, thereby facilitating
accurate inference. 

\end{itemize}

The rest of the paper is organized with our main results in Section \ref{sec:model}, discussion of related work in Section \ref{sec:related}, and experiments in Section \ref{sec:expt}. 
\iftoggle{neuripscausal}{
An earlier version of this paper was published in ICML 2018 (citation
retracted). This version contains a new `limited information'
martingale central limit theorem, as well as new results on
for the special case of multi-armed bandits.}{}
\iftoggle{icml}{
Due to page constraints, all proofs are given in Appendix \ref{app:proofs} in the supplementary
information.
}{}

\section{Main results: $\vect{W}$-decorrelation} \label{sec:model}

We focus on the linear
model and assume that the
data pairs $(y_i, \vect{x}_i)$ satisfy:
\begin{talign} \label{eq:linearmodel}
y_i &= \<\vect{x}_i, \beta\> + \eps_i, 
\end{talign}
where $\eps_i$ are independent and identically distributed
random variables with $\E\{\eps_i \} = 0$, $\E\{\eps_i^2\} = \sigma^2$ 
and bounded third moment. We assume that the samples
are ordered naturally in time and let $\{\cF_i\}_{i\ge 0}$ denote 
the filtration representing
the sample. Formally,
we let data points $(y_i, \vect{x}_i)$ be
adapted to this filtration, i.e. $(y_i, \vect{x}_i)$ are
measurable with respect to $\cF_j$ for all $j\ge i$. 

Our goal in this paper is to use the available data to 
construct \emph{ex post} confidence intervals and $p$-values
for individual parameters, i.e. entries of $\beta$. 
A natural starting point is to consider is the standard least squares
estimate: 
\begin{talign}
\betahat_\ols = (\vect{X}_n^\sT \vect{X}_n)^{-1}\vect{X}_n^\sT \vect{y}_n,  
\end{talign}
where $\vect{X}_n = [\vect{x}_1^\sT, \dots \vect{x}_n^\sT] \in\reals^{n\times p} $ is the 
design matrix and $\vect{y}_n = [y_1, \dots y_n]\in\reals^n$.
When data collection is 
non-adaptive, classical results imply that
the standard least squares estimate $\betahat_{\ols}$ is distributed
asymptotically as $\normal(\beta, \sigma^2 (\vect{X}_n^\sT\vect{X}_n)^{-1})$, 
where $\normal(\mu, \Sigma)$ denotes the Gaussian distribution with mean
$\mu$ and covariance $\Sigma$. \citet{lai1982least} extend these results
to the current scenario:

\begin{theorem}[Theorems 1, 3 \citep{lai1982least}] \label{thm:lai}
Let $\lambda_{\min}(n)$ ($\lambda_{\max}(n)$) denote
the minimum (resp. maximum) eigenvalue of $\vect{X}_n^\sT\vect{X}_n$. 
Under the model \cref{eq:linearmodel}, assume that $(i)$ $\eps_i$ have finite third
moment and  $(ii)$ almost surely, $\lambda_{\min}(n)\to \infty$ with $\lambda_{\min } = \Omega(\log\lambda_{\max})$ and $(iii)$ $\log\lambda_{\max} = o(n)$. Then the following limits hold almost surely:
\begin{talign}
\lVert{\betahat_{\ols} - \beta}\rVert^2_2 &\le C\frac{\sigma^2 p\log \lambda_{\max}}{\lambda_{\min}} \\
  \lvert \frac{1}{n \sigma^2}\lVert{ \vect{y}_n - \vect{X}_n \betahat_{\ols}}\rVert^2_2 - 1  \rvert &\le C(p)\frac{1+\log\lambda_{\max}}{n}. 
\end{talign}
Further assume the following stability condition: 
there exists a deterministic sequence of matrices
$\vect{A}_n$ such that $(iii)$ $\vect{A}_n^{-1}(\vect{\vect{X}_{n}}^\sT\vect{X}_n)^{1/2} \to \id_p $
and $(iv)$ $\max_{i} \twonorm{\vect{A}_n^{-1}\vect{x}_i} \to 0$ in probability. Then,
\begin{talign}
(\vect{X}_n^\sT\vect{X}_n)^{1/2} (\betahat_{\ols} - \beta) &\convD \normal(0, \sigma^2\id_p). 
\end{talign}
\end{theorem}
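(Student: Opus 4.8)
The whole statement is driven by the single vector martingale $\vect{S}_n := \vect{X}_n^\sT \beps_n = \sum_{i=1}^n \vect{x}_i\eps_i$ (using that $\vect{x}_i$ is predictable and $\E[\eps_i\mid\cF_{i-1}]=0$, $\E[\eps_i^2\mid\cF_{i-1}]=\sigma^2$, so $\vect{S}_n$ has conditional covariance increments $\sigma^2\vect{x}_i\vect{x}_i^\sT$) together with its predictable quadratic variation $\vect{V}_n := \vect{X}_n^\sT\vect{X}_n = \sum_{i=1}^n \vect{x}_i\vect{x}_i^\sT$. Since $\betahat_\ols - \beta = \vect{V}_n^{-1}\vect{S}_n$, one has the two exact identities $\lVert \betahat_\ols - \beta\rVert_2^2 = \vect{S}_n^\sT \vect{V}_n^{-2}\vect{S}_n \le \lambda_{\min}(n)^{-1}\, Q_n$ and $\lVert \vect{y}_n - \vect{X}_n\betahat_\ols\rVert_2^2 = \lVert \beps_n\rVert_2^2 - Q_n$, where $Q_n := \vect{S}_n^\sT \vect{V}_n^{-1}\vect{S}_n$ is the self-normalized martingale. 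So the first two (a.s.) claims reduce to an a.s. bound on the scalar $Q_n$, and the third is a martingale CLT for a suitably normalized $\vect{S}_n$.

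\textbf{The two almost-sure bounds.} The key is the Lai--Wei self-normalized strong law $Q_n = O\!\big(\sigma^2 p\log\lambda_{\max}(n)\big)$ a.s. I would derive it from two ingredients: (a) the elliptical-potential identity $\sum_{i\le n}\vect{x}_i^\sT \vect{V}_i^{-1}\vect{x}_i = \sum_{i\le n}\big(1-\det\vect{V}_{i-1}/\det\vect{V}_i\big) \le \log(\det\vect{V}_n/\det\vect{V}_0) = O(p\log\lambda_{\max}(n))$; and (b) the recursion obtained from $\vect{V}_n=\vect{V}_{n-1}+\vect{x}_n\vect{x}_n^\sT$ and Sherman--Morrison, $Q_n - Q_{n-1} = 2\eps_n\,\vect{x}_n^\sT\vect{V}_n^{-1}\vect{S}_{n-1} + \eps_n^2\,\vect{x}_n^\sT\vect{V}_n^{-1}\vect{x}_n - \vect{S}_{n-1}^\sT(\vect{V}_{n-1}^{-1}-\vect{V}_n^{-1})\vect{S}_{n-1}$, in which the first term is a martingale difference, the last term is nonpositive, and the middle term sums (using $\E[\eps_n^2\mid\cF_{n-1}]=\sigma^2$ and a martingale SLLN) to $\sigma^2\sum \vect{x}_n^\sT\vect{V}_n^{-1}\vect{x}_n + \text{l.o.t.}$, which (a) controls. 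Feeding $Q_n = O(p\log\lambda_{\max})$ into the first identity gives the error bound; combining it with the i.i.d.\ SLLN $n^{-1}\lVert\beps_n\rVert_2^2\to\sigma^2$ in the second identity gives the residual bound.

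\textbf{Asymptotic normality.} Write the target as $\vect{V}_n^{1/2}(\betahat_\ols-\beta) = \vect{V}_n^{-1/2}\vect{S}_n = \big(\vect{V}_n^{-1/2}\vect{A}_n\big)\big(\vect{A}_n^{-1}\vect{S}_n\big)$. By stability condition $(iii)$, $\vect{A}_n^{-1}\vect{V}_n^{1/2}\to\id_p$, hence (inverting symmetric matrices) $\vect{V}_n^{-1/2}\vect{A}_n\to\id_p$, so by Slutsky it suffices to show $\vect{A}_n^{-1}\vect{S}_n = \sum_{i=1}^n \bxi_{n,i}\convD\normal(0,\sigma^2\id_p)$ for the martingale-difference array $\bxi_{n,i}:=\eps_i\,\vect{A}_n^{-1}\vect{x}_i$. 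I verify the hypotheses of the (vector) martingale CLT. Conditional covariance: $\sum_i \E[\bxi_{n,i}\bxi_{n,i}^\sT\mid\cF_{i-1}] = \sigma^2\vect{A}_n^{-1}\vect{V}_n\vect{A}_n^{-\sT} = \sigma^2(\vect{A}_n^{-1}\vect{V}_n^{1/2})(\vect{V}_n^{1/2}\vect{A}_n^{-\sT})\to\sigma^2\id_p$ by $(iii)$. Conditional Lindeberg: since $\lVert\bxi_{n,i}\rVert = \lVert\vect{A}_n^{-1}\vect{x}_i\rVert\,\lvert\eps_i\rvert$ and $\eta_n := \max_i\lVert\vect{A}_n^{-1}\vect{x}_i\rVert\to 0$ in probability by $(iv)$, on $\{\eta_n\le\eta\}$ we get $\sum_i\E[\lVert\bxi_{n,i}\rVert^2\ind\{\lVert\bxi_{n,i}\rVert>\delta\}\mid\cF_{i-1}] \le \big(\sum_i\lVert\vect{A}_n^{-1}\vect{x}_i\rVert^2\big)\,\E[\eps_1^2\ind\{\lvert\eps_1\rvert>\delta/\eta\}]$, where $\sum_i\lVert\vect{A}_n^{-1}\vect{x}_i\rVert^2 = \tr(\vect{A}_n^{-1}\vect{V}_n\vect{A}_n^{-\sT})\to p$ and the truncated second moment $\to 0$ as $\eta\to 0$ by dominated convergence (using $\E\eps_1^2<\infty$). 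The martingale CLT then yields $\vect{A}_n^{-1}\vect{S}_n\convD\normal(0,\sigma^2\id_p)$, and Slutsky finishes.

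\textbf{Main obstacle.} The genuine difficulty is the self-normalized strong law for $Q_n$ with an adaptively (and possibly near-degenerately) growing $\vect{V}_n$: the martingale part $\sum 2\eps_n\vect{x}_n^\sT\vect{V}_n^{-1}\vect{S}_{n-1}$ of the recursion has predictable quadratic variation that couples back to $Q_{n-1}$ itself, so turning the $\log\det\vect{V}_n$ potential into an a.s.\ $O(p\log\lambda_{\max})$ rate requires the careful Gronwall-type / martingale-SLLN arguments of Lai and Wei; by contrast, the error identities and the verification of the two CLT hypotheses from $(iii)$--$(iv)$ are routine. A minor subtlety is that $(iii)$--$(iv)$ are only assumed in probability, so the Slutsky and Lindeberg steps should be run along subsequences (or phrased through convergence in probability), which does not weaken the conclusion of convergence in distribution.
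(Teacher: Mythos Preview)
The paper does not contain a proof of this statement: Theorem~\ref{thm:lai} is explicitly attributed to Lai and Wei (1982) and is invoked as a classical background result, not re-derived. The ``Proofs of main results'' section covers only Theorems~\ref{thm:variance}, \ref{thm:vardom}, \ref{thm:martingaleclt} and Propositions~\ref{prop:sufficient}, \ref{prop:commvarianceisstable}. So there is no paper-proof to compare against.

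That said, your sketch is essentially the Lai--Wei argument and is sound in outline. The reduction to the self-normalized quadratic $Q_n=\vect{S}_n^\sT\vect{V}_n^{-1}\vect{S}_n$ via $\betahat_\ols-\beta=\vect{V}_n^{-1}\vect{S}_n$ and $\lVert\vect{y}_n-\vect{X}_n\betahat_\ols\rVert_2^2=\lVert\beps_n\rVert_2^2-Q_n$ is exactly right, and you correctly isolate the hard step as the a.s.\ rate $Q_n=O(p\log\lambda_{\max})$, which is precisely the content of Lai--Wei's Lemma~2/Theorem~1. Your Sherman--Morrison recursion plus elliptical potential is the standard route there; the delicate point you flag---that the martingale part of the recursion has predictable variation coupling back to $Q_{n-1}$---is indeed where Lai and Wei's iterated-logarithm/local-convergence lemma enters. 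For the CLT, your Slutsky reduction to the array $\bxi_{n,i}=\eps_i\vect{A}_n^{-1}\vect{x}_i$ and the verification of the covariance and Lindeberg conditions from $(iii)$--$(iv)$ match the standard martingale-CLT template.

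One small gap worth tightening: for the residual bound you invoke the SLLN $n^{-1}\lVert\beps_n\rVert_2^2\to\sigma^2$, but the stated rate $C(p)(1+\log\lambda_{\max})/n$ requires more than convergence---it needs a quantitative control of $\lVert\beps_n\rVert_2^2-n\sigma^2$ (e.g.\ via the LIL, giving $O(\sqrt{n\log\log n})$ a.s.), and strictly speaking that term is not dominated by $(1+\log\lambda_{\max})/n$ unless $\log\lambda_{\max}$ is at least of order $\sqrt{n\log\log n}$. The paper's paraphrase of Lai--Wei is slightly loose on this point, so the discrepancy is not your fault; just be aware that the clean $O((1+\log\lambda_{\max})/n)$ form for the residual is an informal restatement rather than the sharp Lai--Wei conclusion.
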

At first blush, this allows to construct confidence regions
in the usual way. More precisely, the result implies that $\widehat{\sigma}^2 = \lVert \vect{y_n} - \vect{X}_n \betahat_{\ols} \rVert_2^2/n $ is a consistent estimate
of the noise variance. Therefore, the interval $[\betahat_{\ols, 1} - 1.96 \widehat{\sigma} (\vect{X}_n^\sT\vect{X}_n)^{-1}_{11}, \betahat_{\ols, 1} + 1.96 \widehat{\sigma} (\vect{X}_n^\sT\vect{X}_n)^{-1}_{11}]$ is a 95\% two-sided confidence interval for the first coordinate $\beta_1$. Indeed, this result is sufficient for
a variety of scenarios with weak dependence across samples, 
such as when the $(y_i, \vect{x}_i)$ form a Markov chain 
that mixes rapidly.  However, while the assumptions for
consistency are minimal, the additional stability assumption
required for asymptotic normality poses some challenges.
In particular:

\begin{enumerate}
\item The stability condition can 
provably fail to hold for scenarios where the dependence
across samples is non-negligible. This is not a weakness of Theorem \ref{thm:lai}:  the CLT need not hold for the $\ols$ estimator \citep{lai1982least, lai1983fixed}. 
\item The rate of convergence to the asymptotic CLT depends on the \emph{quantitative
rate} of the stability condition. In other words,
variability in the inverse covariance 
$\vect{X}_n^\sT\vect{X}_n$ can cause 
deviations from normality of $\ols$ estimator \citep{dvoretzky1972asymptotic}. 
In finite samples, this can manifest itself in the bias of the $\ols$ 
estimator as well as in higher moments. 
\end{enumerate}

An example of this phenomenon 
is the standard multi-armed bandit problem \citep{lai1985asymptotically}. At each time
point $i\le n$, the experimenter (or data collecting policy) chooses an arm
$k\in\{1, 2, \dots , p\}$ and observes a reward $y_i$ with mean $\beta_k$. With
$\beta\in\reals^p$ denoting the mean rewards, this falls within the scope of
model \cref{eq:linearmodel}, where the vector $\vect{x}_i$ takes the value $\vect{e}_k$ (the $k^{\text{th}}$
basis vector), if the $k^{\text{th}}$ arm or option is chosen at time $i$.\footnote{Strictly speaking, the model
\cref{eq:linearmodel} assumes that the errors have the same variance, which need not be true
for the multi-armed bandit as discussed. We focus on the homoscedastic case where the errors have the same variance in this paper.} Other stochastic bandit problems with covariates such as contextual or linear bandits \citep{rusmevichientong2010linearly, li2010contextual, deshpande2012linear} can also be incorporated fairly naturally
into our framework. For the purposes of this paper, however, we restrict ourselves to the simple case 
of multi-armed bandits without covariates. In this setting, ordinary least squares estimates 
correspond to computing sample means for each arm. The stability condition of 
Theorem \ref{thm:lai} requires that $N_k(n)$, the number of times a specific
arm $k\in [p]$ is sampled is asymptotically deterministic as $n$ grows large.
This is true for certain regret-optimal algorithms \citep{russo2016simple,garivier2011kl}. 
Indeed, for such algorithms, as the sample size $n$ grows large, the suboptimal
arm is sampled $N_k(n) \sim C_k(\beta) \log n$ 
for a constant $C_k(\beta)$ that depends on $\beta$ and the distribution of noise $\eps_i$. 
However, in finite samples, the dependence on $C_k(\beta)$ and the slow
convergence rate of $(\log n)^{-1/2}$ lead to significant deviation
from the expected central limit behavior. 

\citet{villar2015multi} studied a variety of multi-armed bandit algorithms in the
context of clinical trials. They empirically demonstrate that sample mean estimates 
from data collected using many standard
multi-armed bandit algorithms are biased. Recently, 
\cite{nie2017why} proved that this bias is negative for Thompson sampling and UCB. 
The presence of bias in sample means demonstrates that standard methods for inference, as advocated
by Theorem \ref{thm:lai}, can be misleading when the same data is now used for inference. 
As a pertinent example,  testing the hypotheses ``the mean reward of arm 1 exceeds that of 2''
based on classical theory can be significantly affected by adaptive data collection. 

The papers \citep{villar2015multi,nie2017why} focus on the finite sample effect of the data collection policy on the bias and suggest methods to reduce the bias. It is not hard
to find examples where higher moments or tails of the distribution can be influenced by the data collecting
policy. A simple, yet striking, example 
is the standard autoregressive model (AR) for time series data. In its 
simplest form, the AR model has one covariate, i.e. $p=1$ with 
$\vect{x}_i = y_{i-1}$. In this case:
\begin{talign}\label{eq:ar1model}
y_{i} &= \beta y_{i-1}+ \eps_i. 
\end{talign}
Here the least squares estimate is given by $\betahat_{\ols} = \sum_{i \le n-1} y_{i+1} y_{i}/\sum_{i \le n-1} y_{i-1}^2$. When $|\beta|$ is bounded away from
1, the series is asymptotically stationary and the $\ols$ estimate has Gaussian
tails. On the other hand, when $\beta - 1$ is on the order of $1/n$ the limiting
distribution of the least squares estimate is non-Gaussian and dependent on the gap
$\beta - 1$ (cf. \cite{chan1987asymptotic}). A histogram
for the normalized 
$\ols$ errors in two cases: $(i)$ stationary with $\beta = 0.02$
and $(ii)$  nonstationary with $\beta = 1.0$ is shown on the left in
Figure \ref{fig:ar1demonstration}. The $\ols$ estimate yields
clearly non-Gaussian errors when nonstationary, i.e. when $\beta$
is close to 1.  

On the other hand, \emph{using the same data} our decorrelating procedure is able
to obtain estimates admitting Gaussian limit distributions, as evidenced
in the right panel of Figure \ref{fig:ar1demonstration}. We show a similar phenomenon
in the MAB setting where our decorrelating procedure corrects for
the unstable behavior of the $\ols$ estimator (see Section \ref{sec:expt} for details
on the empirics).
Delegating discussion of further related
work to \ref{sec:related}, we now describe this procedure and its motivation.

\begin{figure*}[t]
\centering
\includegraphics[width=0.45\linewidth]{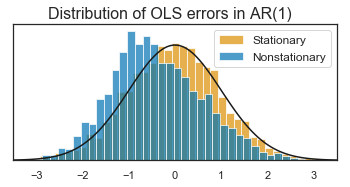}
\hfill
\includegraphics[width=0.45\linewidth]{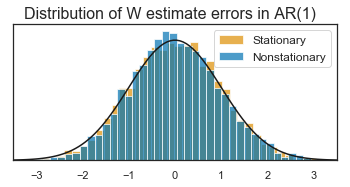}
	\caption{The distribution of normalized errors for (left) the $\ols$ 
	estimator for stationary and (nearly) nonstationary AR(1) time
    series and (right) error distribution
	for both models after decorrelation. \label{fig:ar1demonstration}}
\end{figure*}

\subsection{Removing the effects of adaptivity}

We propose to decorrelate the $\ols$ estimator by constructing:
\begin{talign}\label{eq:decorrdef}
\betahat^d &= \betahat_\ols + \vect{W}_n (y - \vect{X}_n \betahat_\ols),
\end{talign}
for a specific choice of a `decorrelating' or `whitening' matrix
$\vect{W}_n\in\reals^{p\times n}$. This is inspired by the high-dimensional linear regression debiasing constructions of
\cite{zhang2014confidence,javanmard2014hypothesis, javanmard2014confidence, van2014asymptotically}. As we will see, this construction is useful also in the present regime
where we keep $p$ fixed and $n\gtrsim p$. By rearranging:
\begin{talign}
\betahat^d - \beta &= (\id_p - \vect{W}_n \vect{X}_n)(\betahat_\ols - \beta)
 + \vect{W}_n \vect{\eps}_n \nonumber\nonumber\\
&\equiv  \sb + \sv. 
\end{talign}
We interpret $\sb$ as a `bias' and $\sv$ as a `variance'. This is
based on the following critical constraint on the construction
of the whitening matrix 
$\vect{W}_n$: 

\begin{definition}[Well-adaptedness of $\vect{W}_n$]\label{assmp:predictability} Without loss of generality, we assume that $\eps_i$ are adapted to
$\cF_i$. Let $\cG_i\subset\cF_i$ be a filtration such that $\vect{x}_i$ are adapted w.r.t. 
$\cG_i$ and $\eps_i$ is independent of $\cG_i$. 
We say that $\vect{W_n}$ is well-adapted if the columns of $\vect{W}_n$ are adapted to $\cG_i$, i.e. the $i^\text{th}$ column $\vect{w}_i$ is measurable with respect to $\cG_{i}$. 
\end{definition}

With this in hand, we have the following simple lemma.
\begin{lemma}\label{lem:biasvardecomp}
    Assume $\vect{W}_n$ is well-adapted. Then: 
\begin{talign}
    \statictwonorm{\beta - \E\{\betahat^d\}} &\le \E\{\twonorm{ \sb}\},  
    \\ \Var(\sv) &= \sigma^2 \E\{\vect{W}_n\vect{W}_n^\sT \vect\}. 
\end{talign}
 \end{lemma}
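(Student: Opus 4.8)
The plan is to read off both claims from one structural fact: under well-adaptedness, the ``variance'' term $\sv = \vect{W}_n\vect{\eps}_n$ is a centered martingale-type sum. Write $\sv = \sum_{i=1}^n \vect{w}_i\eps_i$. Since $\vect{w}_i$ is $\cG_i$-measurable and $\eps_i$ is independent of $\cG_i$ with $\E\{\eps_i\}=0$, the tower property gives $\E\{\vect{w}_i\eps_i\} = \E\{\vect{w}_i\,\E[\eps_i\mid\cG_i]\} = 0$, hence $\E\{\sv\}=0$. (Throughout one assumes the mild integrability $\E\{\twonorm{\vect{w}_i}^2\}<\infty$, which makes all expectations below well-defined and legitimizes the interchanges of sum and expectation.)

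For the bias bound, take expectations in the decomposition $\betahat^d - \beta = \sb + \sv$ recorded above: $\E\{\betahat^d\} - \beta = \E\{\sb\} + \E\{\sv\} = \E\{\sb\}$, so $\beta - \E\{\betahat^d\} = -\E\{\sb\}$, and by the triangle inequality for vector-valued expectations (convexity of $\twonorm{\cdot}$, i.e.\ Jensen), $\twonorm{\beta - \E\{\betahat^d\}} = \twonorm{\E\{\sb\}} \le \E\{\twonorm{\sb}\}$. For the covariance identity, since $\E\{\sv\}=0$ we have $\Var(\sv) = \E\{\sv\sv^\sT\} = \sum_{i,j=1}^n \E\{\eps_i\eps_j\,\vect{w}_i\vect{w}_j^\sT\}$. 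On the diagonal, conditioning on $\cG_i$ and using $\vect{w}_i\in\cG_i$, $\eps_i\perp\cG_i$, and $\E\{\eps_i^2\}=\sigma^2$ gives $\E\{\eps_i^2\vect{w}_i\vect{w}_i^\sT\} = \E\{\vect{w}_i\vect{w}_i^\sT\,\E[\eps_i^2\mid\cG_i]\} = \sigma^2\,\E\{\vect{w}_i\vect{w}_i^\sT\}$. For an off-diagonal pair with $i<j$, condition on $\cG_j$: the factor $\eps_i\vect{w}_i\vect{w}_j^\sT$ is $\cG_j$-measurable ($\vect{w}_i\in\cG_i\subseteq\cG_j$, $\vect{w}_j\in\cG_j$, and $\eps_i$ is $\cG_j$-measurable since it is $\cF_i$-measurable and $\cF_{j-1}\subseteq\cG_j$), while $\E\{\eps_j\mid\cG_j\}=0$; hence $\E\{\eps_i\eps_j\vect{w}_i\vect{w}_j^\sT\} = \E\{\eps_i\vect{w}_i\vect{w}_j^\sT\,\E[\eps_j\mid\cG_j]\} = 0$. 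Summing and using $\vect{W}_n\vect{W}_n^\sT = \sum_i \vect{w}_i\vect{w}_i^\sT$ yields $\Var(\sv) = \sigma^2\,\E\{\vect{W}_n\vect{W}_n^\sT\}$.

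The one step that deserves care — and the only real obstacle — is the vanishing of the cross terms: it rests on the interplay between the observation filtration $\{\cF_i\}$ and the ``pre-noise'' filtration $\{\cG_i\}$, namely that each past noise $\eps_i$ ($i<j$) has become $\cG_j$-measurable while $\eps_j$ is still independent of $\cG_j$. Equivalently, $(\vect{w}_i\eps_i)_i$ is a martingale difference sequence with respect to $(\cG_{i+1})_i$, which is precisely what the well-adaptedness of $\vect{W}_n$ is designed to guarantee; the rest is the tower property and Jensen's inequality. It is worth stating explicitly the background integrability hypotheses on $\vect{W}_n$ under which the term-by-term expectation computations are valid.
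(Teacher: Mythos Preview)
The paper does not actually supply a proof of this lemma; it is stated as a ``simple lemma'' immediately after the definition of well-adaptedness and is used without further justification in the proof of Theorem~\ref{thm:vardom}. Your argument is the natural one and is correct: show $\E\{\sv\}=0$ term-by-term using $\vect{w}_i\in\cG_i$ and $\eps_i\perp\cG_i$, deduce the bias bound by Jensen, and compute $\Var(\sv)$ by killing the cross terms via conditioning on $\cG_j$ for $i<j$.

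The one point worth noting is the inclusion $\cF_{j-1}\subseteq\cG_j$ (equivalently, $\eps_i\in\cG_j$ for $i<j$) that you invoke to make the cross terms vanish. Strictly speaking, Definition~\ref{assmp:predictability} only asserts $\cG_i\subset\cF_i$, not the reverse nesting $\cF_{i-1}\subseteq\cG_i$. You correctly flag this as ``the only real obstacle'' and identify it as what well-adaptedness is \emph{designed} to guarantee; indeed this is the intended reading (in the bandit and time-series examples $\cG_i$ is the pre-observation $\sigma$-algebra $\sigma(\cF_{i-1},\vect{x}_i)$), and without it the variance identity would not follow. So your proof is complete modulo this implicit assumption, which you have already made explicit.
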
 
A concrete proposal is to trade-off the bias, controlled by the size of
$\id_p - \vect{W}_n\vect{X}_n$, with the the variance which appears through $\vect{W}_n\vect{W}_n^\sT$. This leads to the following optimization problem:
\begin{talign}\label{eqn:W_bias_variance_tradeoff}
    \vect{W}_n & = \arg\min _{\vect{W}} \lVert  \id_p - \vect{W}\vect{X}_n  \rVert_F^2 + \lambda \Tr (\vect{W} \vect{W}^\sT ). 
\end{talign}
Solving the above in closed form yields ridge estimators for $\beta$, 
and by continuity, also the standard least squares estimator. Departing 
from \cite{zhang2014confidence,javanmard2014confidence}, we solve the above
in an \emph{online} fashion in order to obtain a well-adapted $\vect{W}_n$. 
We define, $\vect{W}_0 = 0$, $\vect{X}_0 = 0$, and recursively $\vect{W}_n = [\vect{W}_{n-1} \vect{w}_n]$  for
\begin{align}
\vect{w}_n & = \arg\min_{\vect{w} \in \reals^p} \|\id - \vect{W}_{n-1}\vect{X}_{n-1} - \vect{w} \vect{x}_n^T\|_F^2 + \lambda \twonorm{\vect{w}}^2.
\end{align}
As in the case of the offline optimization, we may obtain closed
form formulae for the columns $\vect{w}_i$ (see Algorithm
\ref{alg:decorrmethod}). The method as specified requires
$O(np^2)$ additional computational overhead, which is typically minimal
compared to computing $\betahat_\ols$ or a regularized
version like the ridge or lasso estimate.  We refer to 
$\betahat^d$ as a \emph{$\vect{W}$-estimate} or a
\emph{$\vect{W}$-decorrelated estimate}.

\iftoggle{arxiv}{

\subsection{Interpretation as reverse implicit SGD}
While we motivated $\vect{W}$-decorrelation
decorrelation
as an online procedure for optimizing the bias-variance tradeoff objective \cref{eqn:W_bias_variance_tradeoff}, it holds a dual interpretation as {implicit stochastic gradient descent} (SGD) \citep[see, e.g.,][]{kulis2010implicit}, also known as {incremental proximal minimization} \citep{bertsekas2011incremental} or the {normalized least mean squares filter} \citep{nagumo1967learning} in this context, with step-size $\lambda$ applied to the least-squares objective, $\frac{1}{n} \sum_{i=1}^n (y_i - \<{\beta},{\vect{x}_i}\>)^2$.
Importantly, to obtain the well-adapted form of our updates, one must apply implicit SGD \emph{in reverse}, starting with the final observation $(y_n, \vect{x}_n)$ and ending with the initial observation $(y_1, \vect{x}_1)$; this recipe yields the parameter updates $\betahat_0 = \betahat_\ols$ and
\begin{talign}
\betahat_{i+1} &= \betahat_{i} + \vect{x}_{n-i} (y_{n-i} - \<\vect{x}_{n-i}, \betahat_{i+1}\>)/\lambda \\
    &= (\id_p + \vect{x}_{n-i}\vect{x}_{n-i}^\sT/\lambda)^{-1} (\betahat_{i} + y_{n-i}\vect{x}_{n-i} /\lambda) \\
    &= (\id_p - \vect{x}_{n-i}\vect{x}_{n-i}^\sT/(\lambda + \twonorm{\vect{x}_{n-i}}^2)) \betahat_{i} \\
     &\quad+ y_{n-i}\vect{x}_{n-i} /(\lambda +\twonorm{\vect{x}_{n-i}}^2).%
\end{talign}
Unrolling the recursion, we obtain 
$\betahat_n = \betahat_\ols + \sum_{i=1}^n y_i \vect{w}_i$
with each $\vect{w}_i$ precisely as in Algorithm \ref{alg:decorrmethod}:
\begin{talign}
\textstyle
\vect{w}_i 
       = \Big(\prod_{j=1}^{i-1}(\id_p - \vect{x}_j\vect{x}_j^\sT/(\lambda + \twonorm{\vect{x}_j}^2) \Big) \cdot \vect{x}_i.  
\end{talign}
}{}

\subsection{Bias and variance}
We now examine the bias and variance control 
for $\betahat^d$. We first begin with a general bound for 
the variance:

\begin{theorem}[Variance control] \label{thm:variance}
For any $\lambda\ge1$ set non-adaptively, we have that
    \begin{talign}
        \Tr\{\Var(\sv)\}  &\le \frac{\sigma^2}{\lambda}(p - \E\{\lVert \id_p - \vect{W}_n\vect{X}_n\rVert_F^2\}). 
    \end{talign}
In particular, $\Tr\{\Var(\sv)\}\le \sigma^2 p/\lambda$. Further, if $\twonorm{\vect{x}_i}^2 \le C$ for all $i$:
\begin{talign}
\Tr\{\Var(\sv)\} &\asymp \frac{\sigma^2}{\lambda}(p - \E\{\lVert \id_p - \vect{W}_n\vect{X}_n\rVert_F^2\}). 
\end{talign}
\end{theorem}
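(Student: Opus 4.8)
My plan is to reduce the entire statement to one telescoping identity for the Frobenius norm of $\id_p - \vect{W}_n\vect{X}_n$ along the online recursion. Set $M_k := \id_p - \vect{W}_k\vect{X}_k$ for $0 \le k \le n$; since $\vect{W}_0 = \vect{X}_0 = 0$ we have $M_0 = \id_p$, and $M_n$ is exactly the matrix appearing in the bound. First I would record the closed form of the column update: the first-order stationarity condition for $\min_{\vect{w} \in \reals^p}\lVert M_{k-1} - \vect{w}\vect{x}_k^\sT\rVert_F^2 + \lambda\twonorm{\vect{w}}^2$ is $\vect{w}(\lambda + \twonorm{\vect{x}_k}^2) = M_{k-1}\vect{x}_k$, i.e. $\vect{w}_k = M_{k-1}\vect{x}_k/(\lambda + \twonorm{\vect{x}_k}^2)$; and since $\vect{W}_k\vect{X}_k = \vect{W}_{k-1}\vect{X}_{k-1} + \vect{w}_k\vect{x}_k^\sT$, this gives the rank-one recursion $M_k = M_{k-1} - \vect{w}_k\vect{x}_k^\sT$. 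Because $\lambda$ is fixed non-adaptively, $\vect{w}_k$ is a deterministic function of $\vect{x}_1,\dots,\vect{x}_k$ only, hence $\cG_k$-measurable, so $\vect{W}_n$ is well-adapted in the sense of Definition~\ref{assmp:predictability} and Lemma~\ref{lem:biasvardecomp} applies: $\Tr\{\Var(\sv)\} = \sigma^2\,\E\{\lVert\vect{W}_n\rVert_F^2\} = \sigma^2\sum_{k=1}^n\E\{\twonorm{\vect{w}_k}^2\}$.

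Next I would expand $\lVert M_k\rVert_F^2 = \lVert M_{k-1} - \vect{w}_k\vect{x}_k^\sT\rVert_F^2$. The cross term simplifies using the optimality identity $M_{k-1}\vect{x}_k = (\lambda + \twonorm{\vect{x}_k}^2)\vect{w}_k$: $\langle M_{k-1},\, \vect{w}_k\vect{x}_k^\sT\rangle_F = \vect{w}_k^\sT M_{k-1}\vect{x}_k = (\lambda + \twonorm{\vect{x}_k}^2)\twonorm{\vect{w}_k}^2$, while $\lVert\vect{w}_k\vect{x}_k^\sT\rVert_F^2 = \twonorm{\vect{x}_k}^2\twonorm{\vect{w}_k}^2$. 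Collecting terms gives the one-step drop $\lVert M_{k-1}\rVert_F^2 - \lVert M_k\rVert_F^2 = (2\lambda + \twonorm{\vect{x}_k}^2)\twonorm{\vect{w}_k}^2$, and summing over $k = 1,\dots,n$ with $\lVert M_0\rVert_F^2 = p$ telescopes to the almost-sure identity $p - \lVert\id_p - \vect{W}_n\vect{X}_n\rVert_F^2 = \sum_{k=1}^n(2\lambda + \twonorm{\vect{x}_k}^2)\twonorm{\vect{w}_k}^2$. Every summand is nonnegative and each coefficient is at least $2\lambda$, so the right side is $\ge 2\lambda\lVert\vect{W}_n\rVert_F^2$; taking expectations and multiplying by $\sigma^2$ yields $\Tr\{\Var(\sv)\} \le \frac{\sigma^2}{2\lambda}\big(p - \E\{\lVert\id_p - \vect{W}_n\vect{X}_n\rVert_F^2\}\big) \le \frac{\sigma^2}{\lambda}\big(p - \E\{\lVert\id_p - \vect{W}_n\vect{X}_n\rVert_F^2\}\big)$, which is at most $\sigma^2 p/\lambda$ since $\lVert M_n\rVert_F^2 \ge 0$. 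For the two-sided (`$\asymp$') claim under $\twonorm{\vect{x}_i}^2 \le C$, I would additionally bound each coefficient above by $2\lambda + C \le (2+C)\lambda$ (using $\lambda \ge 1$), so the right side is also $\le (2+C)\lambda\lVert\vect{W}_n\rVert_F^2$, giving $\Tr\{\Var(\sv)\} \ge \frac{\sigma^2}{(2+C)\lambda}\big(p - \E\{\lVert\id_p - \vect{W}_n\vect{X}_n\rVert_F^2\}\big)$; combined with the upper bound this is the asserted equivalence, with constants depending only on $C$.

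I do not anticipate a real obstacle: the whole argument hinges on spotting the telescoping identity, which itself rests on the algebraic fact that the stationarity condition defining $\vect{w}_k$ collapses the Frobenius cross term to exactly $(\lambda + \twonorm{\vect{x}_k}^2)\twonorm{\vect{w}_k}^2$. The only steps needing a moment's care are (i) checking that the online construction produces a well-adapted $\vect{W}_n$ when $\lambda$ is non-adaptive, which is what licenses the variance formula of Lemma~\ref{lem:biasvardecomp}, and (ii) the harmless use of $\lambda \ge 1$ to absorb the additive constant $C$ into a multiplicative one in the lower bound.
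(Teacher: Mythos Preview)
Your proposal is correct and follows essentially the same approach as the paper: the paper's proof invokes a preparatory lemma establishing the one-step identity $\lVert \id_p - \vect{W}_{i-1}\vect{X}_{i-1}\rVert_F^2 - \lVert \id_p - \vect{W}_i\vect{X}_i\rVert_F^2 = (2\lambda + \twonorm{\vect{x}_i}^2)\twonorm{\vect{w}_i}^2$ and then telescopes, exactly as you do. Your write-up simply inlines the lemma and is more explicit about verifying well-adaptedness and about the constants in the two-sided bound; nothing substantive differs.
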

This theorem suggests that one must set $\lambda$ as large
as possible to minimize the variance. While this is accurate,
one must take into account the bias of $\betahat^d$ and its
dependence on the regularization $\lambda$. Indeed, for large
$\lambda$, one would expect that $\id_p - \vect{W}_n\vect{X}_n \approx \id_p$, 
which would not help control the bias. In general,  one
would hope to set $\lambda$, thereby determining $\betahat^d$,
at a level where its bias is negligible in comparison to
the variance. The following theorem formalizes this: 

    \begin{theorem}[Variance dominates MSE] \label{thm:vardom}
    Recall that the matrix $\vect{W}_n$ is a function of $\lambda$. 
    Suppose that there exists a deterministic sequence $\lambda(n)$ such
    that: 
    \begin{talign}
        \E \{\opnorm{ \id_p - \vect{W}_n\vect{X}_n}^2\} = o(1/\log n), \label{eq:vardomcond1}\\
        \P\{\lambda_{\min}(\vect{X}_n^\sT\vect{X}_n) \le \lambda(n) \log\log n \} \le 1/n, \label{eq:vardomcond2}
    \end{talign}
    Then we have
    \begin{align}
        \frac{\twonorm{\E\{{\sb}\}}^2}{ \Tr\{\Var(\sv)\} }&= o(1) .
    \end{align}

\end{theorem}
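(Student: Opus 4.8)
The plan is to bound the numerator $\twonorm{\E\{\sb\}}^2$ from above and the denominator $\Tr\{\Var(\sv)\}$ from below on a high‑probability event, and to arrange that the former is $o$ of the latter. By Jensen's inequality, $\twonorm{\E\{\sb\}}^2 \le \big(\E\twonorm{\sb}\big)^2 \le \E\{\twonorm{\sb}^2\}$, so it suffices to show $\E\{\twonorm{\sb}^2\} = o\!\left(\Tr\{\Var(\sv)\}\right)$. I will use repeatedly the factorization $\id_p - \vect{W}_n\vect{X}_n = \prod_{j=1}^{n}\big(\id_p - \vect{x}_j\vect{x}_j^\sT/(\lambda+\twonorm{\vect{x}_j}^2)\big)$ that telescopes out of the recursive (reverse‑implicit‑SGD) definition of $\vect{W}_n$: since every factor is a positive‑semidefinite contraction, this yields the deterministic bound $\opnorm{\id_p-\vect{W}_n\vect{X}_n}\le 1$, and hence also $\norm{\id_p-\vect{W}_n\vect{X}_n}_F^2 \le p\,\opnorm{\id_p-\vect{W}_n\vect{X}_n}^2$.

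For the denominator, Lemma~\ref{lem:biasvardecomp} gives $\Tr\{\Var(\sv)\} = \sigma^2\,\E\{\norm{\vect{W}_n}_F^2\}$, and the matching lower bound of Theorem~\ref{thm:variance} gives $\Tr\{\Var(\sv)\} \gtrsim \frac{\sigma^2}{\lambda(n)}\big(p-\E\{\norm{\id_p-\vect{W}_n\vect{X}_n}_F^2\}\big)$; combined with $\E\{\norm{\id_p-\vect{W}_n\vect{X}_n}_F^2\} \le p\,\E\{\opnorm{\id_p-\vect{W}_n\vect{X}_n}^2\} = p\cdot o(1/\log n) = o(1)$ from \eqref{eq:vardomcond1}, this gives $\Tr\{\Var(\sv)\} \gtrsim \sigma^2 p/\lambda(n)$ for all large $n$. (Theorem~\ref{thm:variance}'s lower bound is available under bounded covariates $\twonorm{\vect{x}_i}^2\le C$; the same assumption forces $\lambda_{\max}(\vect{X}_n^\sT\vect{X}_n)\le \Tr(\vect{X}_n^\sT\vect{X}_n)\le Cn$ and hence $\log\lambda_{\max}=O(\log n)$, which I use below.)

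For the numerator, write $\sb = (\id_p-\vect{W}_n\vect{X}_n)(\betahat_\ols-\beta)$ and split according to the event $E_n = \{\lambda_{\min}(\vect{X}_n^\sT\vect{X}_n) > \lambda(n)\log\log n\}\cap\{\twonorm{\betahat_\ols-\beta}^2 \le C'\sigma^2 p\log\lambda_{\max}/\lambda_{\min}(\vect{X}_n^\sT\vect{X}_n)\}$, where the first set has probability $\ge 1-1/n$ by \eqref{eq:vardomcond2} and the second has probability $\to 1$ by Theorem~\ref{thm:lai} (or, for a quantitative complement, by a self‑normalized martingale tail bound). On $E_n$, using $\log\lambda_{\max}=O(\log n)$ and $\lambda_{\min}>\lambda(n)\log\log n$,
\[
\twonorm{\sb}^2 \le \opnorm{\id_p-\vect{W}_n\vect{X}_n}^2\,\twonorm{\betahat_\ols-\beta}^2 \le \opnorm{\id_p-\vect{W}_n\vect{X}_n}^2\cdot\frac{C''\sigma^2 p\log n}{\lambda(n)\log\log n},
\]
so taking expectations and applying \eqref{eq:vardomcond1},
\[
\E\{\twonorm{\sb}^2\,\ind_{E_n}\} \le \frac{C''\sigma^2 p\log n}{\lambda(n)\log\log n}\,\E\{\opnorm{\id_p-\vect{W}_n\vect{X}_n}^2\} = \frac{C''\sigma^2 p\log n}{\lambda(n)\log\log n}\cdot o\!\left(\frac1{\log n}\right) = o\!\left(\frac{\sigma^2 p}{\lambda(n)\log\log n}\right).
\]
On $E_n^c$ I use the deterministic bound $\opnorm{\id_p-\vect{W}_n\vect{X}_n}\le 1$ to get $\twonorm{\sb}^2\le\twonorm{\betahat_\ols-\beta}^2$, and then a Hölder estimate $\E\{\twonorm{\sb}^2\,\ind_{E_n^c}\} \le \E\{\twonorm{\betahat_\ols-\beta}^{2+\delta}\}^{2/(2+\delta)}\,\P(E_n^c)^{\delta/(2+\delta)}$, which is $o(\sigma^2 p/\lambda(n))$ provided $\twonorm{\betahat_\ols-\beta}$ has a bounded moment of some order $2+\delta>2$ (this is where the bounded third moment of $\eps_i$ enters) and $\lambda(n)$ is not too large relative to $\P(E_n^c)^{-1}\le n$. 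Combining the two contributions gives $\E\{\twonorm{\sb}^2\} = o(\sigma^2 p/\lambda(n))$, and dividing by $\Tr\{\Var(\sv)\}\gtrsim\sigma^2 p/\lambda(n)$ yields $\twonorm{\E\{\sb\}}^2/\Tr\{\Var(\sv)\} = o(1)$.

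The main obstacle is the $E_n^c$ term: on the event where $\vect{X}_n^\sT\vect{X}_n$ is poorly conditioned, $\betahat_\ols$ can be badly behaved, so one must play the tail bound $\P(E_n^c)\le 1/n$ from \eqref{eq:vardomcond2} against an integrability estimate for $\betahat_\ols$ rather than merely the almost‑sure rate of Theorem~\ref{thm:lai}; getting the resulting rate to beat $\sigma^2 p/\lambda(n)$ is the delicate point. This is cleanest in the multi‑armed bandit setting, where $\vect{X}_n^\sT\vect{X}_n=\diag(N_1(n),\dots,N_p(n))$ with each $N_k(n)\ge 1$, so the per‑coordinate errors and their moments are controlled directly. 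A secondary, more routine point is the logarithmic bookkeeping: the $\log\lambda_{\max}$ factor in Theorem~\ref{thm:lai}'s bound on $\twonorm{\betahat_\ols-\beta}^2$ is exactly what the extra $\log\log n$ slack in \eqref{eq:vardomcond2} is present to absorb, which is why that iterated logarithm (rather than a constant) appears in the hypothesis.
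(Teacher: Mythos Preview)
Your structure is sound and your denominator bound matches the paper exactly. The difference is in the numerator. After the first Jensen step $\twonorm{\E\{\sb\}}\le\E\twonorm{\sb}$, the paper does \emph{not} pass to $\E\{\twonorm{\sb}^2\}$; instead it applies Cauchy--Schwarz directly to $\E\twonorm{\sb}\le\E\big\{\opnorm{\id_p-\vect{W}_n\vect{X}_n}\cdot\twonorm{\betahat_\ols-\beta}\big\}$, separating the factors into $\E\{\opnorm{\id_p-\vect{W}_n\vect{X}_n}^2\}^{1/2}\,\E\{\twonorm{\betahat_\ols-\beta}^2\}^{1/2}$. Squaring yields $\twonorm{\E\{\sb\}}^2\le\E\{\opnorm{\id_p-\vect{W}_n\vect{X}_n}^2\}\cdot\E\{\twonorm{\betahat_\ols-\beta}^2\}$; the first factor is $o(1/\log n)$ by \eqref{eq:vardomcond1}, the second is $O(\sigma^2 p\log n/\lambda(n))$ via Theorem~\ref{thm:lai} and \eqref{eq:vardomcond2}, and the product is $o(\sigma^2 p/\lambda(n))$---no event splitting, no H\"older.

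Your route through $\E\{\twonorm{\sb}^2\}$ forfeits this factorization and forces the H\"older estimate on $E_n^c$, which you correctly flag as the main obstacle. That estimate needs a bounded $(2+\delta)$-moment of $\twonorm{\betahat_\ols-\beta}$, and this does \emph{not} follow from the third-moment hypothesis on $\eps_i$: since $\betahat_\ols-\beta=(\vect{X}_n^\sT\vect{X}_n)^{-1}\vect{X}_n^\sT\beps_n$, the tail of $\twonorm{\betahat_\ols-\beta}$ is governed by $1/\lambda_{\min}$, and nothing in the hypotheses controls higher moments of that. The paper's Cauchy--Schwarz only needs the second moment, which reduces the residual issue to bounding $\E\{1/\lambda_{\min}\}$---handled (admittedly briskly) by splitting on the event in \eqref{eq:vardomcond2}. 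A minor correction: the $\log\log n$ in \eqref{eq:vardomcond2} is not there to absorb $\log\lambda_{\max}$ in this proof (the $o(1/\log n)$ from \eqref{eq:vardomcond1} already cancels the $\log n$); it is the slack that lets Proposition~\ref{prop:sufficient} deduce \eqref{eq:vardomcond1} from \eqref{eq:vardomcond2} via $\exp(-\lambda_{\min}/\lambda(n))\le 1/\log n$.
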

The conditions of Theorem \ref{thm:vardom}, in particular the bias
condition on $\id_p - \vect{W}_n\vect{X}_n$ are quite general. In the following proposition, we verify
some sufficient conditions under which the premise of Theorem \ref{thm:vardom}
hold.
\begin{proposition}\label{prop:sufficient}
Either of the following conditions suffices for the requirements of Theorem \ref{thm:vardom}. 
\begin{enumerate}
\item The data collection policy satisfies for some sequence $\mu_n(i)$  and for all $\lambda \ge 1$:
\begin{talign}
     \E\{ \frac{\vect{x}_i\vect{x}_i^\sT} {\lambda + \twonorm{\vect{x}_n}^2} \vert \cG_{i-1} \}
     &\mge \frac{\mu_n(i)}{\lambda} \id_p, \label{eq:explorecond1} \\
     \sum_i \mu_n(i) \equiv n\mubar_n &\ge K\sqrt{n}, \label{eq:explorecond2}
\end{talign}
for a large enough constant $K$. 
Here we keep $\lambda(n) \asymp n\mubar_n /\log(p\log n)$. 
\item The matrices $(\vect{x}_i\vect{x}_i^\sT)_{i\le n}$ commute and $\lambda(n)\log\log n$ is (at most) the
$1/n^\text{th}$ percentile of $\lambda_{\min}(\vect{X}_n^\sT\vect{X}_n)$. 
\end{enumerate}
\end{proposition}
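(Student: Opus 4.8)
The plan is to reduce both parts to two facts about the single matrix $B_n := \id_p - \vect{W}_n\vect{X}_n$, exploiting the product form that the online construction of $\vect{W}_n$ enjoys. Unwinding the per-step update for $\vect{w}_i$ gives the minimizer $\vect{w}_i = B_{i-1}\vect{x}_i/(\lambda+\twonorm{\vect{x}_i}^2)$, so $B_i = B_{i-1}M_i$ with $M_i := \id_p - \vect{x}_i\vect{x}_i^\sT/(\lambda+\twonorm{\vect{x}_i}^2)$ and $B_0 = \id_p$, i.e.\ $B_n = M_1 M_2 \cdots M_n$. Each $M_i$ is symmetric with $0 \mle M_i \mle \id_p$ and $M_i^2 \mle M_i$, and by well-adaptedness (Definition~\ref{assmp:predictability}) $B_{i-1}$ is $\cG_{i-1}$-measurable. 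Condition \eqref{eq:vardomcond1} then asks for $\E\{\opnorm{B_n}^2\} = o(1/\log n)$ and \eqref{eq:vardomcond2} for a high-probability lower bound on $\lambda_{\min}(\vect{X}_n^\sT\vect{X}_n)$, and it suffices to exhibit, under each hypothesis, one deterministic $\lambda(n)$ for which both hold.

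For the exploration hypothesis I would prove \eqref{eq:vardomcond1} through a supermartingale contraction. Since $\opnorm{B_n}^2 \le \norm{B_n}_F^2$ and $\norm{B_i}_F^2 = \Tr(B_{i-1}M_i^2 B_{i-1}^\sT)$, conditioning on $\cG_{i-1}$, bounding $M_i^2 \mle M_i$, and using \eqref{eq:explorecond1} (which gives $\E\{M_i \mid \cG_{i-1}\} \mle (1-\mu_n(i)/\lambda)\id_p$) yields $\E\{\norm{B_i}_F^2 \mid \cG_{i-1}\} \le (1-\mu_n(i)/\lambda)\norm{B_{i-1}}_F^2$; iterating and using $1-x \le e^{-x}$ gives $\E\{\norm{B_n}_F^2\} \le p\,\exp(-n\mubar_n/\lambda)$, which is $o(1/\log n)$ once $\lambda(n) = c\,n\mubar_n/\log(p\log n)$ for a sufficiently small constant $c<1$. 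For \eqref{eq:vardomcond2} the key is $\vect{X}_n^\sT\vect{X}_n \mge \vect{S}_n := \sum_{i\le n} \vect{x}_i\vect{x}_i^\sT/(1+\twonorm{\vect{x}_i}^2)$: the summands are $\mle \id_p$, hence bounded even for heavy-tailed $\vect{x}_i$, and have conditional mean $\mge \mu_n(i)\id_p$ by \eqref{eq:explorecond1} at $\lambda=1$. For a fixed unit $\bu$, $\bu^\sT\vect{S}_n\bu = \sum_i \langle\vect{x}_i,\bu\rangle^2/(1+\twonorm{\vect{x}_i}^2)$ is a sum of $[0,1]$-valued increments with predictable mean $m(\bu) \ge n\mubar_n$ and predictable quadratic variation at most $m(\bu)$, so a Freedman-type inequality in its multiplicative (self-normalizing) form gives $\P\{\bu^\sT\vect{S}_n\bu \le \tfrac12 n\mubar_n\} \le \exp(-c'\,n\mubar_n)$. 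Union-bounding over an $\eps$-net of the sphere with $\eps \asymp \mubar_n$ (cardinality polynomial in $n$ since $\mubar_n \ge K/\sqrt n$), and using $\lambda_{\max}(\vect{S}_n) \le \Tr(\vect{S}_n) \le n$ to absorb the discretization error $2\eps\lambda_{\max}(\vect{S}_n)$, gives $\lambda_{\min}(\vect{X}_n^\sT\vect{X}_n) \ge \lambda_{\min}(\vect{S}_n) \ge \tfrac14 n\mubar_n$ with probability at least $1-1/n$; here $n\mubar_n \ge K\sqrt n$, with $K$ large, is exactly what makes $\exp(-c'\,n\mubar_n)$ beat both the polynomial net size and $1/n$. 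Since $\log\log n \lesssim \log(p\log n)$, we then have $\lambda(n)\log\log n \le \tfrac14 n\mubar_n$ for large $n$, which is \eqref{eq:vardomcond2}.

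For the commuting hypothesis, $\{\vect{x}_i\vect{x}_i^\sT\}$, $\{M_i\}$ and $\vect{X}_n^\sT\vect{X}_n$ share an orthonormal eigenbasis, so $\opnorm{B_n} = \max_\ell \prod_{i\le n} \lambda/(\lambda+\theta_{i\ell})$ with $\theta_{i\ell}$ the $\ell$-th eigenvalue of $\vect{x}_i\vect{x}_i^\sT$; using $\log(1+x) \ge x/(1+x)$, and $\max_i\twonorm{\vect{x}_i}^2 = o(\lambda(n))$ as holds in the motivating examples, a short computation gives $\opnorm{B_n} \le \exp(-(1-o(1))\lambda_{\min}(\vect{X}_n^\sT\vect{X}_n)/\lambda(n))$. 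Condition \eqref{eq:vardomcond2} is then the stated percentile hypothesis verbatim, and on the event $\{\lambda_{\min}(\vect{X}_n^\sT\vect{X}_n) > \lambda(n)\log\log n\}$, of probability at least $1-1/n$, this bound gives $\opnorm{B_n}^2 \le (\log n)^{-2+o(1)}$, whence $\E\{\opnorm{B_n}^2\} \le (\log n)^{-2+o(1)} + 1/n = o(1/\log n)$, which is \eqref{eq:vardomcond1}.

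The main obstacle is the $1/n$-probability lower bound on $\lambda_{\min}(\vect{X}_n^\sT\vect{X}_n)$ in the exploration case: with only an $\Omega(\sqrt n)$ exploration budget the margin is thin, so one needs multiplicative rather than additive martingale concentration; the normalization by $1+\twonorm{\vect{x}_i}^2$ (rather than by $\lambda(n)$, which would shrink the signal to $\Theta(\log\log n)$ and defeat the argument) is essential to keep the summands bounded without sacrificing the $\sqrt n$ signal; and the net must be fine enough that $\eps\,\lambda_{\max}(\vect{S}_n)$ stays below $n\mubar_n$. Once the product form $B_n = \prod_i M_i$ is in hand, the remaining ingredients — the supermartingale contraction and the scalar eigenvalue computation in the commuting case — are routine.
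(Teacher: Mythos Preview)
Your proposal is correct and follows essentially the same route as the paper: the product form $B_n=\prod_i M_i$, the supermartingale contraction on $\norm{B_i}_F^2$ for the exploration case (this is exactly the paper's Lemma~\ref{lem:biasboundexplore}), and the scalar eigenvalue computation in the commuting case (Lemma~\ref{lem:biasboundcommute}). The one substantive variation is in verifying \eqref{eq:vardomcond2} under the exploration hypothesis: the paper simply invokes the matrix Azuma inequality of \citet{tropp2012user} together with the standing assumption $\twonorm{\vect{x}_i}^2\le C$, whereas you pass to the normalized sum $\vect{S}_n=\sum_i \vect{x}_i\vect{x}_i^\sT/(1+\twonorm{\vect{x}_i}^2)$ and apply scalar Freedman plus an $\eps$-net. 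Your route is more hands-on but buys something the paper's does not: it works without the boundedness assumption on $\vect{x}_i$, since the normalization forces the increments into $[0,\id_p]$ while preserving the $n\mubar_n$ signal via \eqref{eq:explorecond1} at $\lambda=1$. In the commuting case, your extra hypothesis $\max_i\twonorm{\vect{x}_i}^2=o(\lambda(n))$ is not needed: under the paper's standing boundedness assumption the same eigenvalue calculation gives $\opnorm{B_n}\le\exp(-\lambda_{\min}/(\lambda+C))$ directly, and $\lambda(n)\to\infty$ already makes the $C$ negligible.
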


It is useful to consider the intuition for the sufficient conditions
given in Proposition \ref{prop:sufficient}. By Lemma \ref{lem:biasvardecomp}, note that
the bias is controlled by 
$\opnorm{\id - \vect{W}_n\vect{X}_n}$, which increases 
with $\lambda$. Consider a case in which 
the samples
$\vect{x}_i$ lie in a strict subspace of $\reals^p$. In this case, controlling the bias
uniformly over $\beta\in\reals^p$ is now impossible regardless of the choice of $\vect{W}_n$. 
 For example, in a multi-armed bandit problem, if the policy does not 
sample a specific arm, there is no information available about the reward
distribution of that arm. Proposition \ref{prop:sufficient}
the intuition that the data collecting policy should explore
the full parameter space. 
For multi-armed bandits, policies such
as epsilon-greedy and Thompson sampling satisfy this
assumption with
appropriate 
$\mu_n(i)$. 

Given sufficient exploration, 
Proposition \ref{prop:sufficient} recommends a reasonable value to set for the
regularization parameter. In particular setting $\lambda$ to a value
such that $\lambda \le \lambda_{\min}/\log\log n$ occurs with high probability
suffices to ensure that the $\vect{W}$-decorrelated estimate is approximately
unbiased. Correspondingly, the MSE (or equivalently variance)
of the $\vect{W}$-decorrelated estimate need not be smaller than that of the original
$\ols$ estimate. Indeed the variance scales as $1/\lambda$, which exceeds with high probability
the $1/\lambda_{\min}$ scaling for the MSE. This is the cost paid for 
debiasing $\ols$ estimate.

Before we move to the inference results, note
that the procedure requires only access to high probability lower bounds on $\lambda_{\min}$,
which intuitively quantifies the 
exploration of the data collection policy. 
In comparison with methods such as
propensity score weighting or conditional likelihood optimization, this 
represents rather coarse information about the 
data collection process. In particular, given access to propensity scores
or conditional likelihoods
one can simulate the process to extract appropriate values for the regularization
$\lambda(n)$. This is the approach we take in the experiments of Section \ref{sec:expt}. 
Moreover, propensity scores or conditional likelihoods are ineffective when data 
collection policies make adaptive decisions that are deterministic 
given the history. A important example is
that of UCB algorithms
for bandits, which make deterministic choices of arms. 

\begin{algorithm}[t]

    Input: {sample $(y_i, \mathbf{x}_i)_{i\le n}$, regularization $\lambda$, unit vector $\vect{v} \in\reals^p$, confidence level $\alpha \in (0, 1)$, noise estimate $\hat{\sigma}^2$}.

    Compute: $\betahat_\ols = (\vect{X}_n^\sT\vect{X}_n)^{-1} \vect{X}_n \vect{y}_n$. 

    Setting $\vect{W}_0 = 0$, compute $\vect{W}_i = [\vect{W}_{i-1} \vect{w}_i]$ with
    $\vect{w}_i = (\id_p - \vect{W}_{i-1}\vect{X}_i^\sT) \vect{x}_i /(\lambda + \twonorm{\vect{x}_{i}}^2$), for $i = 1, 2, \dots , n$. 

    Compute $\betahat^d = \betahat_\ols + \vect{W}_n (y - \vect{X}_n \betahat_\ols)$
    and $\hat{\sigma}(\vect{v}) = \hat{\sigma}\<\vect{v}, \vect{W}_n\vect{W}^\sT_n \vect{v}\>^{1/2}$

    Output: decorrelated estimate $\betahat^d$ and CI interval $ I(\vect{v}, \alpha)=  [ \<\vect{v}, \betahat^d\> -\hat{\sigma}(\vect{v})\Phi^{-1}(1-\alpha), \<\vect{v}, \betahat^d\>+\hat{\sigma}(\vect{v}) \Phi^{-1}(1-\alpha)]$. 
  \caption{$\vect{W}$-Decorrelation Method \label{alg:decorrmethod}}
\end{algorithm}

\subsection{A central limit theorem and confidence intervals}

Our final result is a central limit theorem that provides an
alternative to the stability condition of Theorem \ref{thm:lai} and
standard martingale CLTs.
{ Standard martingale CLTs~\citep[see, e.g.,][]{lai1982least,dvoretzky1972asymptotic} require convergence of $\sum_i \vect{w}_i\vect{w}_i^\sT /n$ to a constant, but this convergence condition is violated in many examples of interest, including the AR examples in Section~\ref{sec:expt}.} 

Let $(X_{i, n}, \cF_{i, n}, 1\le i \le n )$ be a martingale
difference array, with the associated
sum process $S_n = \sum_{i\le n} X_{i, n}$ and covariance
process $V_{n} = \sum_{i\le n} \E\{X_{i, n}^2 \vert \cF_{i-1, n}\}$.

\begin{assumption}\label{assmp:momentstability}
\begin{enumerate}
 \item Moments are stable: 
for $a=1, 2$, the following
limit holds 
\begin{align}
\lim_{n\to \infty} \E\Big\{\sum_{i\le n} V_{n}^{-a/2} \Big\lvert  \E\{ X_{i, n}^a | \cF_{i-1, n}, V_{n} \}%
- \E \{ X_{i, n}^a |\cF_{i-1, n} \}  
\Big\rvert \Big\} &= 0
\end{align}
\item Martingale differences are small:
\begin{align}
\lim_{n\to \infty } \sum_{i\le n} \E\Big\{ \frac{\abs{X_{i, n}}^3}{V_{n}^{3/2}}\Big\} &= 0, \\
\lim_{n\to \infty} \frac{\max_{i\le n} \E\{X_{i, n}^2 | \cF_{i-1, n}\}}{V_n} &= 0 \text{ in probability.}
\end{align}
 \end{enumerate}  \end{assumption}

\begin{theorem}[Martingale CLT]\label{thm:martingaleclt}
Under Assumption \ref{assmp:momentstability}, the rescaled
process satisfies $S_n/\sqrt{V_n} \convD \normal(0, 1)$, i.e.
the following
holds for any bounded, continuous test function
$\varphi:\reals\to\reals$:
\begin{align}
\lim_{n\to\infty} \E\big\{\varphi \big(S_n/\sqrt{V_n}\big)  \big\} &=
\E\big\{\varphi(\xi) \big\}, 
\end{align}
where $\xi\sim\normal(0, 1)$. 
\end{theorem}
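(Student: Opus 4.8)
The plan is to establish convergence of characteristic functions, $\E\{\exp(\im t\, S_n/\sqrt{V_n})\}\to\exp(-t^2/2)$ for each $t\in\reals$; by L\'evy's continuity theorem this gives $S_n/\sqrt{V_n}\convD\normal(0,1)$, which in turn yields $\E\{\varphi(S_n/\sqrt{V_n})\}\to\E\{\varphi(\xi)\}$ for every bounded continuous $\varphi$. The feature that makes this more than the classical martingale CLT is that $V_n$ is random and sits in the denominator, so $X_{i,n}/\sqrt{V_n}$ is not $\cF_{i,n}$-measurable and a deterministically normalized martingale CLT cannot be applied verbatim. Assumption~\ref{assmp:momentstability} is precisely the quantitative price one pays to circumvent this by enlarging the filtration so that $V_n$ becomes known in advance.

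Concretely, I would define the enlarged filtration $\cG_{i,n}:=\sigma(\cF_{i,n},V_n)$, so that $V_n$ is $\cG_{0,n}$-measurable and $\sqrt{V_n}$ behaves like a constant normalizer. Conditioning on $V_n$ may destroy the martingale property, so set $m_{i,n}:=\E\{X_{i,n}\mid\cG_{i-1,n}\}$ and $\tilde X_{i,n}:=X_{i,n}-m_{i,n}$, a genuine $\cG_{i,n}$-martingale-difference array. Since $\E\{X_{i,n}\mid\cF_{i-1,n}\}=0$ and $\sigma(\cF_{i-1,n},V_n)=\cG_{i-1,n}$, the $a=1$ case of the moment-stability condition reads $\E\{\sum_{i\le n}\abs{m_{i,n}}/\sqrt{V_n}\}\to0$; hence $\sum_{i\le n}m_{i,n}/\sqrt{V_n}\to0$ in $L^1$ (and in probability), and by Slutsky it suffices to prove the CLT for $\tilde S_n/\sqrt{V_n}$ with $\tilde S_n:=\sum_{i\le n}\tilde X_{i,n}$.

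Next I would check the hypotheses of a standard (deterministically normalized) martingale CLT for the array $(\tilde X_{i,n}/\sqrt{V_n},\cG_{i,n})$. For the conditional-variance sum, $\E\{\tilde X_{i,n}^2\mid\cG_{i-1,n}\}=\E\{X_{i,n}^2\mid\cG_{i-1,n}\}-m_{i,n}^2$; the $a=2$ stability case gives $\sum_i\abs{\E\{X_{i,n}^2\mid\cG_{i-1,n}\}-\E\{X_{i,n}^2\mid\cF_{i-1,n}\}}/V_n\to0$ in $L^1$, and by definition $\sum_i\E\{X_{i,n}^2\mid\cF_{i-1,n}\}=V_n$, while $\sum_i m_{i,n}^2/V_n\le(\max_i\abs{m_{i,n}}/\sqrt{V_n})(\sum_i\abs{m_{i,n}}/\sqrt{V_n})\to0$ once one notes $\abs{m_{i,n}}^2\le\E\{X_{i,n}^2\mid\cG_{i-1,n}\}$ and transfers the max condition from $\cF$ to $\cG$ using the same $a=2$ bound. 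Thus $\sum_{i\le n}\E\{\tilde X_{i,n}^2\mid\cG_{i-1,n}\}/V_n\to1$ in probability — a deterministic limit, which is the whole point of passing to $\cG$. The Lyapunov/Lindeberg condition follows from the third-moment hypothesis together with $\abs{\tilde X_{i,n}}^3\lesssim\abs{X_{i,n}}^3+\abs{m_{i,n}}^3$ and conditional Jensen (using that $V_n$ is $\cG_{i-1,n}$-measurable): $\sum_i\E\{\abs{m_{i,n}}^3/V_n^{3/2}\}\le\sum_i\E\{\abs{X_{i,n}}^3/V_n^{3/2}\}\to0$. The standard martingale CLT then gives $\tilde S_n/\sqrt{V_n}\convD\normal(0,1)$, and re-adding the negligible drift finishes the argument.

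The step I expect to be the real obstacle is the filtration-enlargement bookkeeping in the previous paragraph: transferring the conditional-second-moment max condition and the Lindeberg condition from $\cF_{i-1,n}$ to the larger $\cG_{i-1,n}$ and from $X_{i,n}$ to $\tilde X_{i,n}$, and making sure the $L^1$-type stability bounds of Assumption~\ref{assmp:momentstability} genuinely deliver the in-probability statements (conditional-variance sum $\to1$, its max $\to0$) required by an off-the-shelf theorem. One should also keep track of the event $\{V_n\le\delta\}$: although every relevant quantity is already normalized by $V_n$, a short truncation argument on $\{V_n\ge\delta\}$ followed by $\delta\downarrow0$ may be needed to pass cleanly from the $L^1$ bounds to convergence in probability.
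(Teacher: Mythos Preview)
Your proposal is correct and takes a genuinely different route from the paper. The paper proves the result \emph{directly} by a characteristic-function telescoping argument: it expands $e^{\im t X_{i,n}/\sqrt{V_n}}$ to third order, conditions on $\sigma(\cF_{i-1,n},V_n)$, uses the $a=1,2$ stability clauses to replace $\E\{X_{i,n}^a\mid\cF_{i-1,n},V_n\}$ by $\E\{X_{i,n}^a\mid\cF_{i-1,n}\}$ at a cost $\nu^a_{i,n}$, peels off one increment at a time, and sums the four error streams $\sum_i(t\nu^1_{i,n}+t^2\nu^2_{i,n}+t^3\nu^3_{i,n}+t^4\nu^4_{i,n})$, each of which vanishes by Assumption~\ref{assmp:momentstability}. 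You instead \emph{reduce} to an off-the-shelf martingale CLT by enlarging the filtration to $\cG_{i,n}=\sigma(\cF_{i,n},V_n)$, removing the induced drift $m_{i,n}$, and verifying that under $\cG$ the conditional-variance sum has the \emph{deterministic} limit~$1$ while Lyapunov holds. The two arguments use the hypotheses in essentially the same places (your $a=1$ step is the paper's $\nu^1$, your $a=2$ step is $\nu^2$, and both use the third-moment and max-variance conditions for the small-increment control), so neither is more general; the paper's direct computation is self-contained and, because it tracks explicit error terms, could in principle be upgraded to a Berry--Esseen-type rate, whereas your reduction is conceptually cleaner and makes transparent why Assumption~\ref{assmp:momentstability} is exactly what is needed to pass from $\cF$ to $\cG$. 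The bookkeeping you flag as the ``real obstacle'' (transferring the max condition via $\max_i|\Delta_i|/V_n\le\sum_i|\Delta_i|/V_n\to0$ in $L^1$, and controlling $\sum_i m_{i,n}^2/V_n$ by the max-times-sum trick) goes through as you outline; your worry about $\{V_n\le\delta\}$ is unnecessary since every quantity in both the hypotheses and the target is already expressed as a ratio with $V_n$ in the denominator.
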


The first part of Assumption \ref{assmp:momentstability} is 
an alternate form of stability. 
It controls the dependence of
the conditional covariance of $S_n$ on the first two conditional moments of the martingale increments $X_{i, n}$. In words, it states that the knowledge of the 
conditional covariance $\sum_i \E\{X_{i, n}^2 |\cF_{i-1, n}\}$ does not change the
first two moments of increments $X_{i, n}$ by an appreciable
amount\footnote{See \cite{hall2014martingale}, Theorem 3.4 for an example of a martingale central limit theorem in this flavor.}. 

With a CLT in hand, one can now assign confidence
intervals in the standard fashion, based on the assumption that 
the bias is negligible. For instance, we have
result on two-sided confidence intervals.

\begin{proposition}
Fix any $\alpha > 0$. Suppose that the data collection process
satisfies the assumptions of Theorems \ref{thm:vardom} and \ref{thm:martingaleclt}. 
Set $\lambda = \lambda(n)$ as in Theorem \ref{thm:vardom}, and let
$\widehat{\sigma}$ be a consistent estimate of $\sigma$ as 
in Theorem \ref{thm:lai}. Define $\vect{Q} = \widehat{\sigma}^2 \vect{W}_n \vect{W}_n^\sT$
and the interval
$I(a, \alpha) =  [\betahat^d_a -  \sqrt{Q_{aa}}\Phi^{-1}(1-\alpha/2), \betahat^d_a + \sqrt{Q_{aa}}\Phi^{-1}(1-\alpha/2)$. Then 
\begin{talign}
\limsup_{n\to\infty} \, \P\{\beta_a \not\in  I(a, \alpha) \} &\le \alpha.
\end{talign}
\end{proposition}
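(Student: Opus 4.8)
The plan is to prove the studentized limit $T_n := (\betahat^d_a - \beta_a)/\sqrt{Q_{aa}} \convD \normal(0,1)$ and then read the coverage bound off the Portmanteau theorem: since $\pm\Phi^{-1}(1-\alpha/2)$ are continuity points of the standard normal law, $\lim_n \P\{|T_n| > \Phi^{-1}(1-\alpha/2)\} = \P\{|\xi| > \Phi^{-1}(1-\alpha/2)\} = \alpha$ for $\xi\sim\normal(0,1)$, which is exactly $\limsup_n\P\{\beta_a\notin I(a,\alpha)\}\le\alpha$ (in fact with equality). Write $\vect{v}=\vect{e}_a$ and recall the decomposition from the text, $\betahat^d_a-\beta_a = \<\vect{v},\sv\> + \<\vect{v},\sb\>$, with $\sv=\vect{W}_n\vect{\eps}_n$ and $\sb=(\id_p-\vect{W}_n\vect{X}_n)(\betahat_\ols-\beta)$.

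First I would treat the variance term $\<\vect{v},\sv\> = \sum_{i\le n}\<\vect{v},\vect{w}_i\>\eps_i$. By well-adaptedness (Definition \ref{assmp:predictability}) and the i.i.d.\ noise assumption, $X_{i,n}:=\<\vect{v},\vect{w}_i\>\eps_i$ is a martingale difference array for a filtration $\{\cF_{i,n}\}$ to which the $\vect{w}_i$ are adapted, with predictable quadratic variation $V_n=\sum_i\E\{X_{i,n}^2\mid\cF_{i-1,n}\}=\sigma^2\<\vect{v},\vect{W}_n\vect{W}_n^\sT\vect{v}\>$. Since the data collection process is assumed to satisfy Assumption \ref{assmp:momentstability} for this array, Theorem \ref{thm:martingaleclt} gives $\<\vect{v},\sv\>/\sqrt{V_n}\convD\normal(0,1)$ --- notably without requiring $V_n$ to converge to a constant, which is the whole point of that theorem. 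Because $Q_{aa}=\widehat{\sigma}^2\<\vect{v},\vect{W}_n\vect{W}_n^\sT\vect{v}\>$ and $\widehat{\sigma}\to\sigma$ by Theorem \ref{thm:lai}, we have $V_n/Q_{aa}=\sigma^2/\widehat{\sigma}^2\to1$ in probability, so Slutsky's lemma upgrades this to $\<\vect{v},\sv\>/\sqrt{Q_{aa}}\convD\normal(0,1)$. Hence $T_n\convD\normal(0,1)$ will follow, by one further application of Slutsky, once I show the bias is negligible on this scale, i.e.\ $\<\vect{v},\sb\>/\sqrt{Q_{aa}}\to0$ in probability.

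The bias step is the crux. I would bound $|\<\vect{v},\sb\>|\le\twonorm{\sb}\le\opnorm{\id_p-\vect{W}_n\vect{X}_n}\,\twonorm{\betahat_\ols-\beta}$, using that $\opnorm{\id_p-\vect{W}_n\vect{X}_n}\le1$ holds identically because for $\lambda\ge1$ the online recursion gives $\id_p-\vect{W}_i\vect{X}_i=(\id_p-\vect{W}_{i-1}\vect{X}_{i-1})(\id_p-\vect{x}_i\vect{x}_i^\sT/(\lambda+\twonorm{\vect{x}_i}^2))$ and the second factor is a contraction. Then I restrict to the event $E_n$ on which $\lambda_{\min}(\vect{X}_n^\sT\vect{X}_n)>\lambda(n)\log\log n$ (by \eqref{eq:vardomcond2}), $\lambda_{\max}(\vect{X}_n^\sT\vect{X}_n)=\poly(n)$, and $\<\vect{v},\vect{W}_n\vect{W}_n^\sT\vect{v}\>\gtrsim1/\lambda(n)$; this has probability $1-o(1)$. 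On $E_n$ the a.s.\ bound of Theorem \ref{thm:lai} gives $\twonorm{\betahat_\ols-\beta}^2\lesssim\sigma^2 p\log n/(\lambda(n)\log\log n)$, so with \eqref{eq:vardomcond1} one gets $\E\{\<\vect{v},\sb\>^2\ind_{E_n}\}\le\frac{C\sigma^2 p\log n}{\lambda(n)\log\log n}\,\E\{\opnorm{\id_p-\vect{W}_n\vect{X}_n}^2\}=o\!\big(\tfrac{\sigma^2 p}{\lambda(n)\log\log n}\big)$. Combining with $Q_{aa}\gtrsim\sigma^2/\lambda(n)$ on $E_n$ and Markov's inequality, $\P\{|\<\vect{v},\sb\>|>\delta\sqrt{Q_{aa}},\,E_n\}\lesssim o(p/\log\log n)/\delta^2\to0$ for every $\delta>0$; since $\P(E_n^c)\to0$ this yields $\<\vect{v},\sb\>/\sqrt{Q_{aa}}\to0$ in probability, completing the derivation of $T_n\convD\normal(0,1)$.

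The one ingredient that must be supplied --- and the part I expect to be hardest --- is the high-probability lower bound $\<\vect{v},\vect{W}_n\vect{W}_n^\sT\vect{v}\>\gtrsim1/\lambda(n)$ that goes into $E_n$: the difficulty is that the \emph{in-expectation} bias control inherited from Theorem \ref{thm:vardom} must be weighed against the \emph{random} studentizer $\sqrt{Q_{aa}}$. The naive bound $\<\vect{v},\vect{W}_n\vect{W}_n^\sT\vect{v}\>\gtrsim1/\lambda_{\max}$ --- which follows from $\opnorm{\id_p-\vect{W}_n\vect{X}_n}\le\tfrac12$ via $1=\twonorm{\vect{v}}\le\sqrt{\lambda_{\max}}\,\twonorm{\vect{W}_n^\sT\vect{v}}+\tfrac12$ --- is too lossy, since it costs a condition-number factor $\lambda_{\max}/\lambda_{\min}$ that need not be bounded. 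Instead one should run a per-coordinate version of the variance analysis behind Theorem \ref{thm:variance}: under an exploration condition of the type \eqref{eq:explorecond1} one has $\E\{\vect{W}_n\vect{W}_n^\sT\}\mge(c/\lambda)\,\id_p$, and a second-moment argument on the complement upgrades this to $\<\vect{v},\vect{W}_n\vect{W}_n^\sT\vect{v}\>\gtrsim1/\lambda$ with probability $1-o(1)$. With that lower bound in hand the three pieces --- the martingale CLT of Theorem \ref{thm:martingaleclt}, consistency of $\widehat{\sigma}$, and negligibility of the bias --- assemble through Slutsky into $T_n\convD\normal(0,1)$, and the proposition follows.
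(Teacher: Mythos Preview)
The paper does not supply a proof of this proposition; it is stated immediately after Theorem~\ref{thm:martingaleclt} as a direct consequence, and the appendix contains proofs only of Theorems~\ref{thm:variance}, \ref{thm:vardom}, \ref{thm:martingaleclt} and Propositions~\ref{prop:sufficient}, \ref{prop:commvarianceisstable}. So there is nothing to compare against except the paper's implicit claim that the result follows ``in the usual fashion'' from the preceding theorems.

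Your overall architecture---apply Theorem~\ref{thm:martingaleclt} to the martingale array $X_{i,n}=\<\vect{e}_a,\vect{w}_i\>\eps_i$, transfer to the empirical scale $\sqrt{Q_{aa}}$ via Slutsky and $\widehat\sigma\to\sigma$, then kill the bias term $\<\vect{e}_a,\sb\>/\sqrt{Q_{aa}}$---is exactly the route the paper has in mind, and your recursion identity $\id_p-\vect{W}_i\vect{X}_i=(\id_p-\vect{W}_{i-1}\vect{X}_{i-1})(\id_p-\vect{x}_i\vect{x}_i^\sT/(\lambda+\twonorm{\vect{x}_i}^2))$ and the resulting contraction bound $\opnorm{\id_p-\vect{W}_n\vect{X}_n}\le 1$ are correct and match what the paper uses elsewhere (Lemma~\ref{lem:Wclosedform}, Lemma~\ref{lem:biasboundcommute}).

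You have also put your finger on a genuine gap that the paper itself does not close: the conclusion of Theorem~\ref{thm:vardom} controls $\twonorm{\E\{\sb\}}^2/\Tr\{\Var(\sv)\}$, i.e.\ an \emph{expected} bias over an \emph{expected trace}, whereas the proposition needs the \emph{random} ratio $\<\vect{e}_a,\sb\>^2/Q_{aa}$ to vanish in probability. The per-coordinate high-probability lower bound $\<\vect{e}_a,\vect{W}_n\vect{W}_n^\sT\vect{e}_a\>\gtrsim 1/\lambda(n)$ is indeed not a consequence of \eqref{eq:vardomcond1}--\eqref{eq:vardomcond2} alone, and your proposed remedy---invoking an exploration condition like \eqref{eq:explorecond1} from Proposition~\ref{prop:sufficient} or the commutative structure behind Proposition~\ref{prop:commvarianceisstable}---is the right move, but it goes beyond the proposition's stated hypotheses. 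In short, your proof sketch is sound and is what the paper intends; the residual difficulty you flag is real and reflects that the proposition, as written, is somewhat informal and implicitly leans on the stronger sufficient conditions of Proposition~\ref{prop:sufficient} (or Proposition~\ref{prop:commvarianceisstable}) rather than the bare hypotheses of Theorems~\ref{thm:vardom} and~\ref{thm:martingaleclt}.
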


\subsection{Stability for multi-armed bandits}

Limited information central limit theorems such
as Theorem \ref{thm:martingaleclt} (or \cite[Theorem 3.4]
{hall2014martingale}), while providing insight into
the problem of determining asymptotics, have assumptions
that are often difficult to check in practice. Therefore, 
sufficient conditions such as the stability assumed in
Theorem \ref{thm:lai} are often preferred while analyzing
the asymptotic behavior of martingales. In this section we circumvent this problem
by proving the standard version of stability (as
assumed in Theorem \ref{thm:lai}) for $\vect{W}$-estimates, 
assuming the matrices $\vect{x}_i \vect{x}_i^\sT$ commute.
While this is not a complete resolution to the problems
posed by limited information martingale CLT's, it applies
to important special cases like multi-armed bandits. 

\begin{figure}[t]
\includegraphics[width=0.31\linewidth]{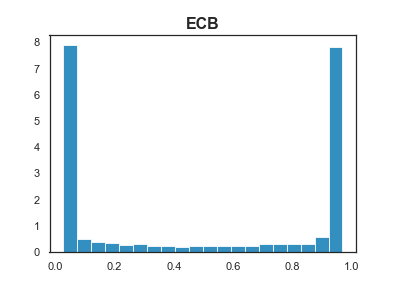} 
\includegraphics[width=0.31\linewidth]{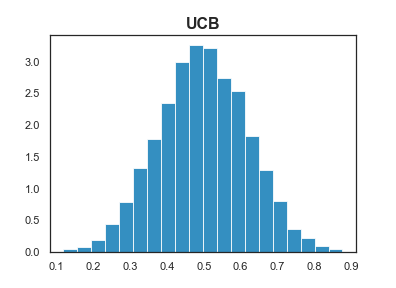}
\includegraphics[width=0.31\linewidth]{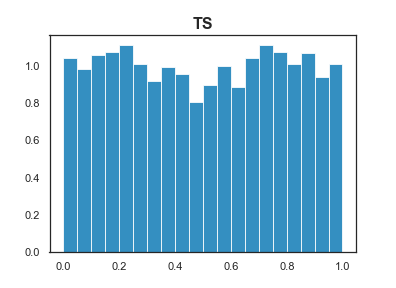}
\caption{Histograms of the distribution of $N_1(n)/n$, the fraction of times arm 1 is picked under $\eps$-greedy, UCB and Thompson sampling.
The bandit problem has $p=2$ arms which have i.i.d. $\Unif([-0.7, 1.3])$
rewards and a time horizon of $n=1000$. The distribution is plotted
over 4000 Monte Carlo iterations. 
\label{fig:mabarmdist}}
\end{figure}

Recall that the stability assumed in Theorem \ref{thm:lai} requires
a non-random sequence of matrices $\vect{A}_n$ so that
\begin{align}
\vect{A}^{-1}_n \bX_n\bX_n^\sT &\stackrel{p}{\to} \id_p
\end{align}
When the vectors $\bx_i$ take values among $\{\bv_1, \dots \bv_p\}$,
a set of orthogonal vectors, we have
\begin{align}
\bX_n\bX_n^\sT &= \sum_i \bx_i\bx_i^\sT \\
&= \sum_{a=1}^p \bv_a\bv_a^\sT \sum_i \ind(\text{arm } a \text{ chosen at time } i), \\
&= \sum_{a=1}^p \bv_a\bv_a^\sT N_a(n),
\end{align}
where we define $N_a(n) = \sum_{i=1}^n \ind(\bx_i = \bv_a)$. Therefore, 
if there existed $\vect{A}_n$ so that the stability condition held, 
then we would have, for each $a$, that $N_a(n)\<\bv_a, \vect{A}^{-1}_n\bv_a\> \to 1$ in probability.

We test this assumption in a simple, but illuminating
setting: a multi-armed bandit problem with $p=2$ 
arms that are \emph{statistically identical}: they
each yield i.i.d. $\Unif([-0.7, 1.3])$ rewards. We run
$\eps$-greedy (with a fixed value  $\eps=0.1$), 
Thompson sampling and a variant of UCB for a time
horizon of $n=1000$ for 4000 Monte Carlo iterations. The
resulting histograms of the fraction $N_1(n)/n$ of times
arm 1 was picked by each of the three policies is given 
in Figure \ref{fig:mabarmdist}. Since the arms are statistically identical,
the algorithm behavior is exchangeable with respect to
switching the arm labels, viz. switching arm 1 for arm 2. In particular, the distribution of $N_1(n)$ and $N_2(n)$ is identical, for a
given policy. Combining this with $N_1(n) + N_2(n) = n$, we 
have that $\E\{N_1(n)\} = \E\{N_2(n)\} = n/2$. Therefore, 
if stability a la Theorem \ref{thm:lai} held, this would imply
that the distribution of fraction $N_1(n)/n$ would be close
to a Dirac delta at $1/2$. 
However, we see that for all the three policies UCB, Thompson sampling and $\eps$-greedy, this is not the case. 
 Indeed, $N_1(n)/n$ has significant variance about $1/2$ for
 all the policies; to wit, the $\eps$-greedy indeed shows a sharp bimodal behavior. Consequently, the stability condition required
 by Theorem \ref{thm:lai} \emph{fails to hold} quite dramatically
 in this simple setting. As we observe in Section \ref{sec:expt}, this affects significantly the limiting distribution of the sample means,
 which have non-trivial bias and poor coverage of nominal confidence intervals.

In the following, we will prove that $\vect{W}$-estimates 
are indeed stable in the sense of Theorem \ref{thm:lai},
given a judicious choice of $\lambda = \lambda(n)$.  
Suppose that for each time $i$, $\vect{x}_i \in \{\bv_1, \dots, \bv_p\}$
the latter being a set of orthogonal (not necessarily unit normed) 
vectors $\bv_a$. We also define $N_a(i) =  \sum_{j\le i} \ind(\bx_j = \bv_a) $.
The following proposition shows that when $\lambda = \lambda(n)$
is set appropriately, the $\vect{W}$-estimate is stable.  
\begin{proposition}\label{prop:commvarianceisstable}
Suppose that the sequence $\lambda = \lambda(n)$ 
satisfies  $(i)$ $\lambda(n)/\lambda_{\min}(\bX_n\bX_n^\sT)\to 0$ in probability and $(ii)$  $\lambda(n) \to \infty$. 
Then the following holds: 
\begin{align}
 \lambda (n)\vect{W_n} \vect{W_n} ^\sT&\stackrel{L_1}{\to} \frac{\id_p}{2}.  
\end{align}
\end{proposition}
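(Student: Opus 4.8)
The plan is to use the commuting (equivalently, orthogonal) hypothesis to diagonalize the recursion defining $\vect{W}_n$ in a common eigenbasis, extract an \emph{exact} closed form for $\vect{W}_n\vect{W}_n^\sT$ via a geometric sum, and then pass to the limit. Write $c_a=\twonorm{\bv_a}^2$ and let $P_a=\bv_a\bv_a^\sT/c_a$ be the rank-one orthogonal projector onto $\bv_a$; since the $\bv_a$ are orthogonal and $p$ in number, $\sum_{a=1}^p P_a=\id_p$. Introduce $M_i:=\id_p-\vect{W}_i\vect{X}_i$. Using $\vect{w}_i=(\id_p-\vect{W}_{i-1}\vect{X}_{i-1})\vect{x}_i/(\lambda+\twonorm{\vect{x}_i}^2)$ and $\vect{W}_i\vect{X}_i=\vect{W}_{i-1}\vect{X}_{i-1}+\vect{w}_i\vect{x}_i^\sT$, a one-line manipulation gives
\begin{align}
M_i = M_{i-1}\Big(\id_p-\tfrac{\vect{x}_i\vect{x}_i^\sT}{\lambda+\twonorm{\vect{x}_i}^2}\Big),\qquad
\vect{w}_i=\frac{M_{i-1}\vect{x}_i}{\lambda+\twonorm{\vect{x}_i}^2},\qquad M_0=\id_p .
\end{align}
Every $\vect{x}_i\vect{x}_i^\sT$ is diagonal in $\{P_a\}$, so when $\vect{x}_i=\bv_a$ the factor above equals $\id_p-(1-r_a)P_a$ with $r_a:=\lambda/(\lambda+c_a)\in[0,1)$ and it fixes $P_b$ for $b\neq a$. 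Iterating, $M_{i-1}\bv_a=r_a^{N_a(i-1)}\bv_a$, where $N_a(i)=\sum_{j\le i}\ind(\vect{x}_j=\bv_a)$.

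\textbf{Closed form.} Consequently $\vect{w}_i=r_a^{N_a(i-1)}\bv_a/(\lambda+c_a)$ whenever $\vect{x}_i=\bv_a$. Grouping the $n$ summands of $\vect{W}_n\vect{W}_n^\sT=\sum_{i\le n}\vect{w}_i\vect{w}_i^\sT$ by the arm played — so that, for fixed $a$, the exponent $N_a(i-1)$ runs over $0,1,\dots,N_a(n)-1$ — and summing $\sum_{k=0}^{N_a(n)-1}r_a^{2k}=(1-r_a^{2N_a(n)})/(1-r_a^2)$ with $1-r_a^2=c_a(2\lambda+c_a)/(\lambda+c_a)^2$, one obtains the exact identity (valid for every outcome, the term vanishing on $\{N_a(n)=0\}$)
\begin{align}
\lambda\,\vect{W}_n\vect{W}_n^\sT=\sum_{a=1}^p \frac{\lambda\,\big(1-r_a^{2N_a(n)}\big)}{2\lambda+c_a}\,P_a .
\end{align}

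\textbf{Passing to the limit.} Take $\lambda=\lambda(n)$. Each coefficient lies in $[0,\tfrac12]$ deterministically (as $r_a\in[0,1)$ and $\lambda/(2\lambda+c_a)<\tfrac12$), hence $0\preceq\lambda\vect{W}_n\vect{W}_n^\sT\preceq\tfrac12\id_p$ and $\opnorm{\lambda\vect{W}_n\vect{W}_n^\sT}\le\tfrac12$ uniformly in $n$. Hypothesis $(ii)$ gives $\lambda(n)/(2\lambda(n)+c_a)\to\tfrac12$. Since $\lambdamin(\bX_n\bX_n^\sT)=\min_a c_aN_a(n)$, hypothesis $(i)$ forces $\lambda(n)/N_a(n)\to0$ in probability for every $a$ (in particular $\P(\exists a:N_a(n)=0)\to0$, since otherwise $\lambdamin=0$ with non-vanishing probability, contradicting $(i)$), so $N_a(n)/(\lambda(n)+c_a)\to\infty$ in probability and $r_a^{2N_a(n)}\le\exp(-2c_aN_a(n)/(\lambda+c_a))\to0$ in probability. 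Combining, $\lambda(n)\vect{W}_n\vect{W}_n^\sT\to\tfrac12\sum_a P_a=\tfrac12\id_p$ in probability; since the sequence is uniformly bounded, bounded convergence upgrades this to convergence in $L_1$, which is the assertion.

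\textbf{Main obstacle.} The only step needing genuine care is the bookkeeping behind the closed form — verifying that $\vect{w}_i$ has exactly the stated rank-one diagonal shape and that, for each $a$, the exponents $N_a(i-1)$ enumerate $\{0,\dots,N_a(n)-1\}$ once each. Everything afterward is routine: a uniform operator-norm bound together with convergence in probability of two explicit scalar factors.
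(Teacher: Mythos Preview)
Your proposal is correct and follows essentially the same approach as the paper: diagonalize the recursion for $\id_p-\vect{W}_i\vect{X}_i$ in the common eigenbasis $\{P_a\}$, sum the resulting geometric series to obtain a closed form for $\vect{W}_n\vect{W}_n^\sT$, and then combine convergence in probability of the coefficients with a uniform bound and bounded convergence to upgrade to $L_1$. Your bookkeeping is in fact slightly cleaner than the paper's (you track $N_a(n)$ rather than $N_a(n-1)$ and simplify $(\lambda+c_a)^2(1-r_a^2)=c_a(2\lambda+c_a)$ explicitly), but the argument is the same.
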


Along with Theorem \ref{thm:vardom} and
Proposition \ref{prop:sufficient}, this immediately 
yields a simple corollary on the distribution of $\vect{W}$-estimates in the
commutative setting. The key advantage here is that 
we are able to circumvent the assumptions of the limited information
central limit Theorem \ref{thm:martingaleclt}. 
\begin{corollary}
Suppose that $\vect{x}_i$ take values in $\{\bv_1, \dots \bv_p\}$, 
a set of orthogonal vectors. Let $\widehat{\sigma}^2$
be an estimate of the variance $\sigma^2$ as obtained from
Theorem \ref{thm:lai} and $\betahat^d$ be the $\vect{W}$-estimate
obtained using $\lambda = \lambda(n)$ so that 
 $\lambda(n)\log\log(n) \E\{\lambda_{\min}^{-1}(\vect{X}_n^\sT\vect{X}_n)\} \to 0$. 
 Then, with $\xi\sim\normal(0, \id_p)$ and any Borel set $A \subseteq \reals^p$:
 \begin{align}
 \lim_{n\to\infty} \P\Big\{ (\widehat{\sigma}^2\lambda(n) \vect{W}_n\vect{W}_n^\sT)^{-1/2}(\betahat^d - \beta ) \in A \Big\} &= \P\{\xi \in A\}. 
 \end{align}
\end{corollary}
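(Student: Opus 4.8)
The plan is to run the bias--variance split of Lemma~\ref{lem:biasvardecomp}, $\betahat^d-\beta=\sb+\sv$ with $\sb=(\id_p-\vect{W}_n\vect{X}_n)(\betahat_\ols-\beta)$ and $\sv=\vect{W}_n\vect{\eps}_n$, and to control the two pieces on the common scale set by the studentizing matrix $\vect{Q}_n=\widehat{\sigma}^2\vect{W}_n\vect{W}_n^\sT$. Concretely I would show (i) that the standardized variance term obeys a \emph{classical} martingale central limit theorem, (ii) that the bias term is negligible relative to $\sv$ after the same normalization, and (iii) that consistency of $\widehat{\sigma}^2$ together with the convergence of $\vect{W}_n\vect{W}_n^\sT$ turn (i)--(ii) into the claimed Borel-set limit via Slutsky's lemma and the portmanteau theorem. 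The role of the commuting hypothesis is precisely that it makes step (i) classical: by Proposition~\ref{prop:commvarianceisstable} the conditional covariance of $\sv$ is asymptotically deterministic, so the limited-information Theorem~\ref{thm:martingaleclt} is not needed here.

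\emph{The variance term.} Note $\sv=\sum_{i\le n}\vect{w}_i\eps_i$ is a sum of martingale differences for the filtration $\{\cG_i\}$ --- well-adaptedness of $\vect{W}_n$ makes each $\vect{w}_i\eps_i$ conditionally mean zero, which is what underlies Lemma~\ref{lem:biasvardecomp} --- with conditional covariance $\sigma^2\vect{W}_n\vect{W}_n^\sT$. I would first check that Proposition~\ref{prop:commvarianceisstable} applies: its condition $\lambda(n)/\lambda_{\min}(\vect{X}_n^\sT\vect{X}_n)\to0$ in probability follows from $\lambda(n)\log\log n\,\E\{\lambda_{\min}^{-1}(\vect{X}_n^\sT\vect{X}_n)\}\to0$ by Markov's inequality, and $\lambda(n)\to\infty$ is built into the prescribed choice of $\lambda(n)$. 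Hence $\lambda(n)\vect{W}_n\vect{W}_n^\sT\to\id_p/2$ in $L_1$, so the conditional covariance of $\sqrt{\lambda(n)}\,\sv$ converges in probability to the deterministic positive-definite matrix $\sigma^2\id_p/2$. The remaining Lindeberg-type negligibility is immediate in the orthogonal case: the matrix $\id_p-\vect{W}_{i-1}\vect{X}_{i-1}^\sT$ appearing in the update for $\vect{w}_i$ is a contraction (diagonal with entries in $[0,1]$ in the common eigenbasis), so $\twonorm{\vect{w}_i}\le\twonorm{\vect{x}_i}/(\lambda(n)+\twonorm{\vect{x}_i}^2)$, and since the $\vect{x}_i$ take at most $p$ distinct values while $\lambda(n)\to\infty$ we get $\sqrt{\lambda(n)}\,\max_{i\le n}\twonorm{\vect{w}_i}\to0$. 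A standard multivariate martingale CLT (or Cram\'er--Wold plus a one-dimensional version) then yields $\sqrt{\lambda(n)}\,\sv\convD\normal(0,\sigma^2\id_p/2)$.

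\emph{The bias term.} From the almost sure bound $\twonorm{\sb}\le\opnorm{\id_p-\vect{W}_n\vect{X}_n}\,\twonorm{\betahat_\ols-\beta}$ and the first bound of Theorem~\ref{thm:lai} ($\twonorm{\betahat_\ols-\beta}^2\le C\sigma^2p\log\lambda_{\max}/\lambda_{\min}$ a.s., with $\log\lambda_{\max}=O(\log n)$ here), the whole bias is governed by $\opnorm{\id_p-\vect{W}_n\vect{X}_n}$. In the orthogonal case this matrix is diagonal in the common eigenbasis, with eigenvalue $(\lambda(n)/(\lambda(n)+\twonorm{\bv_a}^2))^{N_a(n)}$ along $\bv_a$, so $\opnorm{\id_p-\vect{W}_n\vect{X}_n}\le\exp(-c\,\lambda_{\min}(\vect{X}_n^\sT\vect{X}_n)/\lambda(n))$ for some $c>0$, which is exponentially small because $\lambda(n)/\lambda_{\min}(\vect{X}_n^\sT\vect{X}_n)\to0$ in probability. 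Consequently $\sqrt{\lambda(n)}\,\twonorm{\sb}\to0$ in probability. (This is the commuting instance of Theorem~\ref{thm:vardom}, whose bias premise is checked through part~2 of Proposition~\ref{prop:sufficient}.)

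\emph{Assembly, and the main obstacle.} Write $\vect{Q}_n^{-1/2}(\betahat^d-\beta)=\vect{Q}_n^{-1/2}\sv+\vect{Q}_n^{-1/2}\sb$. On the probability-$(1-o(1))$ event that $\vect{W}_n\vect{W}_n^\sT$ is nonsingular, $\vect{Q}_n^{-1/2}=\widehat{\sigma}^{-1}(\vect{W}_n\vect{W}_n^\sT)^{-1/2}$; since $\widehat{\sigma}^2\to\sigma^2$ a.s.\ (Theorem~\ref{thm:lai}) and $\lambda(n)\vect{W}_n\vect{W}_n^\sT\to\id_p/2$ in probability, the continuous mapping theorem gives $\lambda(n)^{-1/2}\vect{Q}_n^{-1/2}\to(\sigma^2/2)^{-1/2}\id_p$ in probability. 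Writing $\vect{Q}_n^{-1/2}\sv=\big(\lambda(n)^{-1/2}\vect{Q}_n^{-1/2}\big)\big(\sqrt{\lambda(n)}\,\sv\big)$ and combining the variance-term CLT with Slutsky's lemma yields $\vect{Q}_n^{-1/2}\sv\convD\normal(0,\id_p)$, while $\opnorm{\vect{Q}_n^{-1/2}}$ being of order $\sqrt{\lambda(n)}$ makes $\vect{Q}_n^{-1/2}\sb\to0$ in probability by the bias estimate. Hence $\vect{Q}_n^{-1/2}(\betahat^d-\beta)\convD\normal(0,\id_p)$, and the displayed identity for Borel sets is the portmanteau restatement of this weak convergence (valid for continuity sets of $\normal(0,\id_p)$, which covers the rectangles used for confidence statements). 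The step I expect to be the real obstacle is the bias bound: because the normalizer diverges like $\sqrt{\lambda(n)}$, a merely polynomial decay of $\opnorm{\id_p-\vect{W}_n\vect{X}_n}$ would not suffice, and one must genuinely exploit both the commutativity of the $\vect{x}_i\vect{x}_i^\sT$ and the coupling $\lambda(n)/\lambda_{\min}(\vect{X}_n^\sT\vect{X}_n)\to0$ forced by the hypothesis to obtain the exponential decay.
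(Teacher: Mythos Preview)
Your proposal is correct and follows the approach the paper intends. The paper does not give a detailed proof of this corollary; it simply remarks that Proposition~\ref{prop:commvarianceisstable} (stability of $\lambda(n)\vect{W}_n\vect{W}_n^\sT$) together with Theorem~\ref{thm:vardom} and Proposition~\ref{prop:sufficient} ``immediately yields'' the result, the point being that commutativity restores the deterministic-covariance condition so that a classical martingale CLT applies in place of Theorem~\ref{thm:martingaleclt}. Your write-up spells out exactly this: Markov to deduce $\lambda(n)/\lambda_{\min}\to 0$ in probability from the hypothesis, Proposition~\ref{prop:commvarianceisstable} for covariance stabilization, the contraction bound $\sqrt{\lambda(n)}\max_i\twonorm{\vect w_i}\to0$ for Lindeberg negligibility, the commutative bias bound (your exponential estimate is Lemma~\ref{lem:biasboundcommute}), and a Slutsky assembly. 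Your caveat about Borel versus continuity sets is well taken: the paper's ``any Borel set $A$'' is a slight overstatement of what weak convergence literally delivers, and your restriction to $\normal(0,\id_p)$-continuity sets is the correct reading.
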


\section{Related work} \label{sec:related}

There is extensive work in statistics and 
econometrics on stochastic regression
models \citep{wei1985asymptotic,lai1994asymptotic,chen1999strong,heyde2008quasi} and non-stationary
time series \citep{shumway2006time,enders2008applied,phillips1988testing}. 
This line of work is analogous to Theorem \ref{thm:lai} or 
restricted to specific time series models. We instead focus on literature from sequential decision-making,  
policy learning and causal inference that closely resembles our work in terms of
goals, techniques and applicability. 

The seminal work of Lai and Robbins \citep{robbins1985some,lai1985asymptotically} has
spurred a vast literature on multi-armed bandit problems and sequential experiments that propose allocation algorithms based on confidence bounds (see \cite{bubeck2012regret}
and references therein). A variety of confidence bounds and corresponding rules have been
proposed \citep{auer2002using,dani2008stochastic,rusmevichientong2010linearly,abbasi2011online,jamieson2014lil}
based on martingale concentration and the law of iterated logarithm. While these results
can certainly be used to compute valid confidence intervals, they
are conservative for
a few reasons. First, they do not explicitly account for bias in $\ols$ estimates and, correspondingly,
must be wider to account for it.  Second, obtaining optimal constants in the concentration inequalities can require sophisticated tools even for non-adaptive
data \citep{Led96,ledoux2005concentration}. This is evidenced in all of our
experiments which show that concentration inequalities yield valid, but conservative
intervals.  

A closely-related line of work is that of learning from logged data \citep{li2011unbiased,dudik2011doubly,swaminathan2015batch} 
and policy learning \citep{athey2017efficient,kallus2017balanced}. The focus here is efficiently estimating the 
reward (or value) of a certain test policy using data collected from a different policy. For linear
models, this reduces to accurate prediction which is directly
related to the estimation error on the parameters $\beta$. While our work shares some features, we focus on unbiased estimation of the 
parameters and obtaining accurate confidence intervals for linear functions of the parameters. Some of the work on learning from logged data also 
builds on propensity scores
and their estimation  \citep{imbens2000role,lunceford2004stratification}. 

\citet{villar2015multi} empirically demonstrate the presence
of bias for a number of multi-armed bandit algorithms. Recent work by \citet{dimakopoulou2017estimation}
also shows a similar effect in contextual bandits. Along with a result on the sign of the bias,
\cite{nie2017why} also propose conditional likelihood optimization methods to estimate parameters of the linear 
model. Through the lens of selective inference, they also propose methods to randomize the data
collection process that simultaneously lower bias and reduce the MSE. Their techniques rely
on considerable information about (and control over) the data generating process, in particular
the probabilities of choosing a specific action at each point in the data selection. This can be
viewed as lying on the opposite end of the spectrum from our work, which attempts to use only the
data at hand, along with coarse aggregate information on the exploration inherent in the data generating process.  It is an interesting, and open, direction to consider approaches that
can combine the strengths of our approach and that of \cite{nie2017why}.

\section{Experiments} \label{sec:expt}

In this section we empirically validate the decorrelated
estimators in
two scenarios that involve sequential dependence
in covariates. 
Our first scenario is a simple
experiment of multi-armed bandits 
while the second scenario is autoregressive time series data.
In these cases, we compare the empirical coverage and typical
widths of confidence intervals for parameters obtained
via three methods: $(i)$ classical OLS theory, 
$(ii)$ concentration inequalities and $(iii)$ decorrelated
estimates. \iftoggle{arxiv}
{Code for reproducing our experiments are
available \citep{deshpande2018decorrelating}.}{}

\subsection{Multi-armed bandits}

\begin{figure}[!bp]
\centering
\includegraphics[width=.49\linewidth]{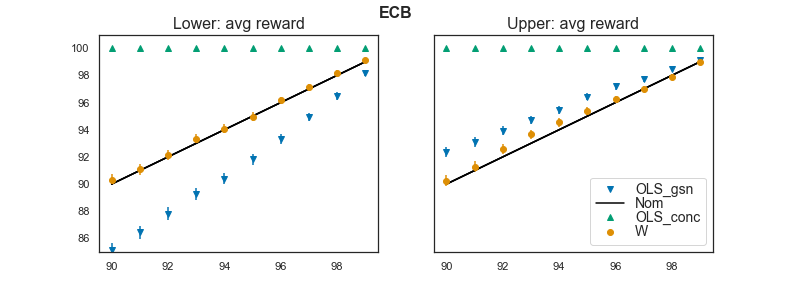}
\includegraphics[width=.49\linewidth]{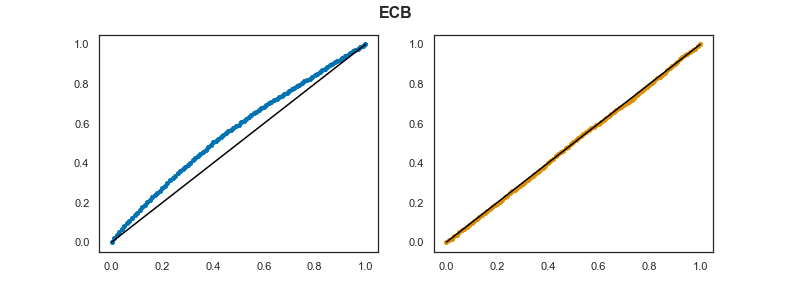}

\includegraphics[width=.49\linewidth]{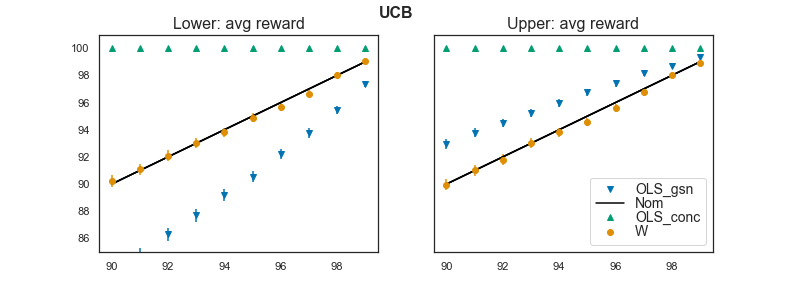}
\includegraphics[width=.49\linewidth]{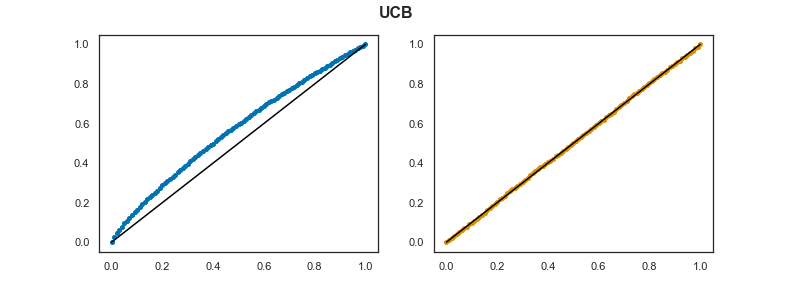}

\includegraphics[width=.49\linewidth]{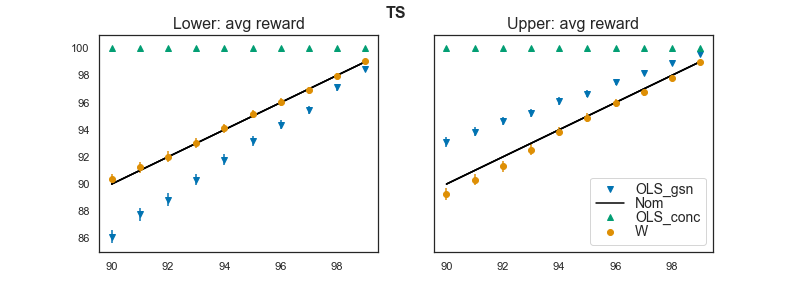}
\includegraphics[width=.49\linewidth]{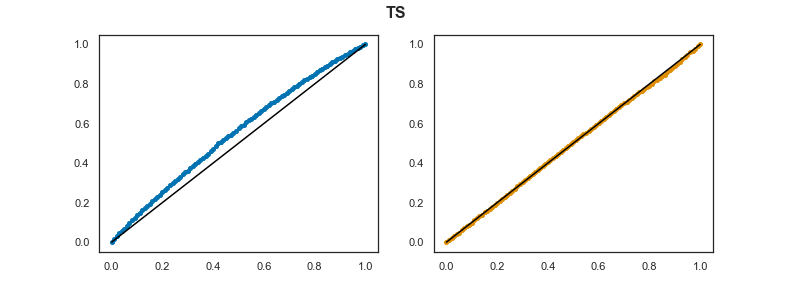}

	\caption{Multi-armed bandit results. Left: One-sided confidence region coverage for $\ols$ and decorrelated $\vect{W}$-decorrelated estimates of the average
	reward $0.5\beta_1 + 0.5\beta_2$.  Right: Probability (PP) plots for the $\ols$ and $\vect{W}$-decorrelated estimate errors of the average reward.} \label{fig:mab}
\end{figure}
\begin{figure*}[!tp]
\centering
\includegraphics[width=0.31\linewidth]{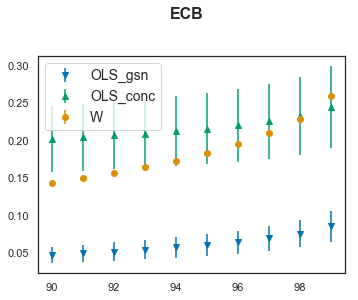} 
\includegraphics[width=0.31\linewidth]{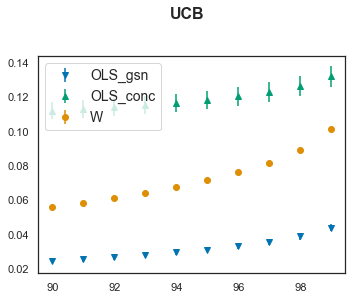}
\includegraphics[width=0.31\linewidth]{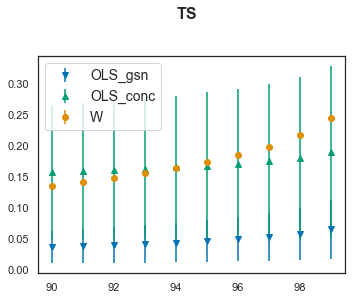}
\caption{Multi-armed bandit results. Mean 2-sided confidence interval widths (error bars show 1 standard deviation) for the 
average reward $0.5\beta_1+ 0.5\beta_2$ in the MAB experiment. \label{fig:mabwidth}}
\end{figure*}

In this section, we demonstrate the utility of the
$\vect{W}$-estimator for a stochastic multi-armed
bandit setting.  
\citet{villar2015multi} studied this problem in the context of patient allocation in clinical trials. Here the trial
proceeds in a sequential fashion with the $i^\text{th}$
patient given one of $p$ treatments, encoded as $\vect{x}_i =\vect{e}_a$ with $a\in [p]$, and $y_i$ denoting the outcome
observed. We model the outcome as $y_i = \<\vect{x}_i, \beta\> + \eps_i$ where $\eps_i\sim\Unif([-1, 1])$ with $\beta = (0.3, 0.3)$ being
the mean outcome of the treatments. Note that
the two treatments are \emph{statistically identical} in terms
of outcome. As we will see, the adaptive sampling induced by the bandit
strategies, however, introduces significant biases in the 
estimates. 

We sequentially assign one of $p=2$ treatments to each of $n=1000$ patients using one of three policies (i) an $\eps$-greedy policy (called ECB or Epsilon Current Belief), $(ii)$ a
practical UCB strategy based on the law of iterated logarithm (UCB) \citep{jamieson2014lil} and (iii) Thompson sampling \citep{thompson1933likelihood}. The ECB and TS sampling strategies
are Bayesian. They place an independent Gaussian prior (with mean $\mu_0=0.3$ and variance $\sigma^2_0=0.33$) on each unknown mean outcome parameter and form an updated posterior belief concerning $\beta$ following each treatment administration $\vect{x_i}$ and observation $y_i$. 

For ECB, the treatment administered to patient $i$ is, with probability $1-\eps = .9$, the treatment with the largest posterior mean; with probability $1-\eps$, a uniformly random treatment is administered instead, to ensure sufficient exploration of all treatments.  Note that this strategy satisfies condition \cref{eq:explorecond1} with $\mu_n(i) = \eps/p$. For TS, at each patient $i$, a sample $\betahat$ of the mean treatment effect is drawn from the posterior belief. The treatment assigned to patient is the one maximizing the sampled mean treatment, i.e. $a_*(i) = \arg\max_{a\in [p]} \betahat_a$. In UCB, the algorithm maintains a
score for each arm $a\in [p]$ that is a combination of the mean reward that the arm
achieves and the empirical uncertainty of the reward. For each patient $i$, the UCB
algorithm chooses the arm maximizing this score, and updates the score according 
to a fixed rule. For details on the specific implementation, see \citet{jamieson2014lil}. 
Our goal is to produce confidence intervals for the 
$\beta_a$ of each treatment based on the data 
adaptively collected from these standard bandit algorithms. 
We will compare the estimates and corresponding intervals
for the \emph{average reward} $0.5\beta_1 + 0.5\beta_2$. 
As the two arms/treatments are statistically identical,
this isolates the effect of adaptive sampling on 
the obtained estimates.

We repeat the simulation for $5000$ Monte Carlo runs. 
From each trial, we estimate the parameters $\beta$ using both $\ols$ and the $\vect{W}$-estimator with $\lambda = \hat{\lambda}_{5\%, \pi}$ which is the $5^\text{th}$ percentile of $\lambda_{\min}(n)$ achieved by the policy $\pi \in \{\text{ECB}, \text{UCB}, \text{TS}\}$. This choice is guided by Corollary \ref{thm:vardom}.
 
We compare the quality of confidence regions for the 
average  reward $0.5\beta_1 + 0.5\beta_2$ obtained
from the $\vect{W}$-decorrelated estimator, the $\ols$ estimator with
standard Gaussian theory ($\ols_{\text{gsn}}$), and the $\ols$ 
estimator using concentration inequalities ($\ols_{\text{conc}}$) \citep[Sec. 4]{abbasi2011online}.
Figure \ref{fig:mab} (left column) shows that the $\ols$ Gaussian have have inconsistent coverage from the nominal. This is consistent with
the observation that the sample means are biased negatively 
\citep{nie2017why}. The concentration OLS tail bounds are all 
conservative, producing nearly 100\% coverage, irrespective of the nominal level. This is intuitive, since they must
account for the bias in sample means \citep{nie2017why}. Meanwhile, the decorrelated intervals improves coverage uniformly over $\ols$ 
intervals, often achieving the nominal coverage. 

Figure \ref{fig:mab} (right column) shows the PP plots of $\ols$ and $\vect{W}$-estimator errors for the average reward $0.5\beta_1 +0.5 \beta_2$. Recall that a PP plot between two distributions on the real line with densities $P$ and $Q$ is the parametric curve $(P(z), Q(z)), z\in \reals$ \cite[Chapter 4.7]{gibbons2011nonparametric}.
The distribution of $\ols$ errors is clearly seen to be
distinctly non-Gaussian. 

Figure \ref{fig:mabwidth} summarizes the distribution of $2$-sided interval widths produced by each method for the sum reward.
As expected, the $\vect{W}$-decorrelation intervals are wider than those of $\ols_{\text{gsn}}$ but 
compare favorably with those provided by $\ols_{\text{conc}}$.
For UCB, the mean $\ols_{\text{conc}}$ widths 
are always largest.
For TS and ECB, $\vect{W}$-decorrelation yields smaller intervals than $\ols_{\text{conc}}$ for moderate confidence levels
and comparable for high confidence levels. From this, we see that $\vect{W}$-decorrelation intervals can be 
considerably less conservative than the concentration-based confidence intervals.

\subsection{Autoregressive time series}
\ifboolexpr{togl{arxiv} or togl{neuripscausal}}{

\begin{figure}[t]
\begin{tabular*}{0.5\linewidth}{c}
\includegraphics[width=0.45\linewidth]{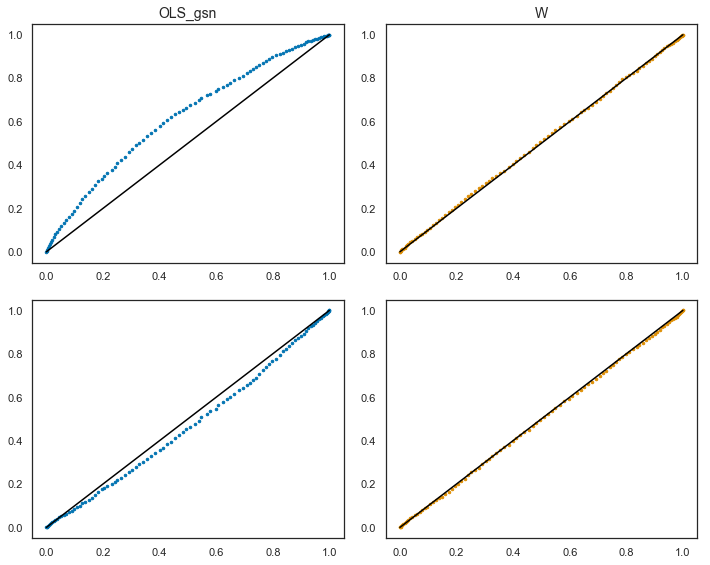}
\end{tabular*}
\hfill
\begin{tabular*}{0.5\linewidth}{c}
\includegraphics[width=0.45\linewidth]{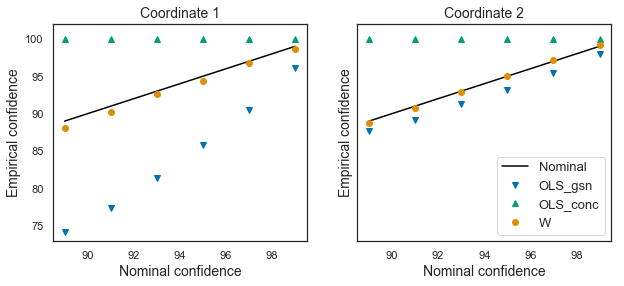}\\
\includegraphics[width=0.45\linewidth]{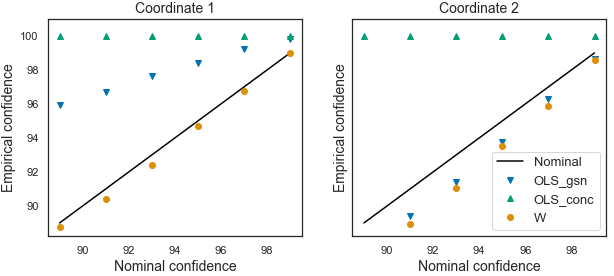}
\end{tabular*}
\begin{center}
\includegraphics[width=0.6\textwidth]{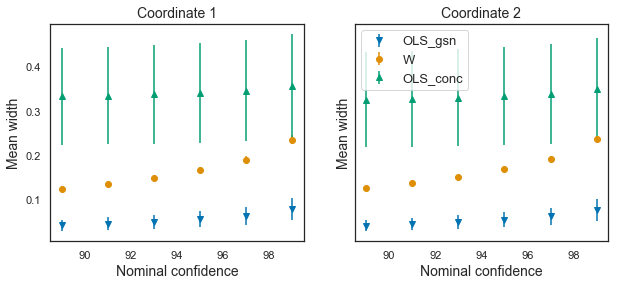}
\end{center}
	\caption{ AR(2) time series results.
	Upper left: PP plot for the distribution of errors of standard $\ols$ estimate and the $W$-decorrelated estimate. Upper right: Lower (top) and upper (bottom) coverage probabilities for $\ols$ with Gaussian intervals, $\ols$ with concentration inequality intervals, and decorrelated $\vect{W}$-decorrelated estimate intervals.  Note that `Conc' has always 100\% coverage. 
	Bottom: Average 2 sided confidence interval widths obtained using the $\ols$ estimator with standard Gaussian
	theory, $\ols$ with concentration inequalities and the
	$\vect{W}$-decorrelated estimator. 
} \label{fig:ar2}
\end{figure}
}{}

In this section, we consider
the classical AR$(p)$ model
where $y_i = \sum_{\ell \le p} \beta_\ell y_{i - \ell} + \eps_i.
$. 
We generate data for the
model \label{eq:timeseriesmodel} with parameters $p = 2, n = 50, \beta = (0.95, 0.2)$, 
$y_0 = 0$ and $\eps_i \sim\Unif([-1, 1])$; all estimates
are computed over
$4000$ monte carlo iterations. 

We plot the coverage confidences for various values of 
the nominal
on the right panel of Figure \ref{fig:ar2}. The PP plot of the error
distributions on the bottom right panel of Figure \ref{fig:ar2} shows that 
the $\ols$ errors are skewed downwards, while the $\vect{W}$-estimate errors are 
nearly Gaussian. We obtain the following improvements over the comparison methods of
$\ols$ standard errors $\ols_{\text{gsn}}$ and concentration inequality widths $\ols_{\text{conc}}$
\citep{abbasi2011online}

 The Gaussian $\ols$ confidence regions systematically give incorrect empirical coverage. Meanwhile, the concentration inequalities provide very conservative intervals, with nearly 100\% coverage, irrespective of the nominal level.  
In contrast, our decorrelated intervals achieve empirical coverage that closely approximates the nominal levels. 
These coverage improvements are enabled by an increase in width over that of $\ols_{\text{gsn}}$, but the $\vect{W}$-estimate widths are systematically smaller than those of the concentration inequalities.

\iftoggle{icml}{
\begin{figure}[!htb]
\centering
\includegraphics[width=0.8\linewidth]{{"Nonstationary_AR2_Empirical_Coverage_Probabilities_Lower_Tail"}.png}
\includegraphics[width=0.8\linewidth]{{"Nonstationary_AR2_Empirical_Coverage_Probabilities_Upper_Tail"}.png}
\includegraphics[width=0.83\linewidth]{{"Nonstationary_AR2_Width_Comparison"}.png}
\includegraphics[width=0.83\linewidth]{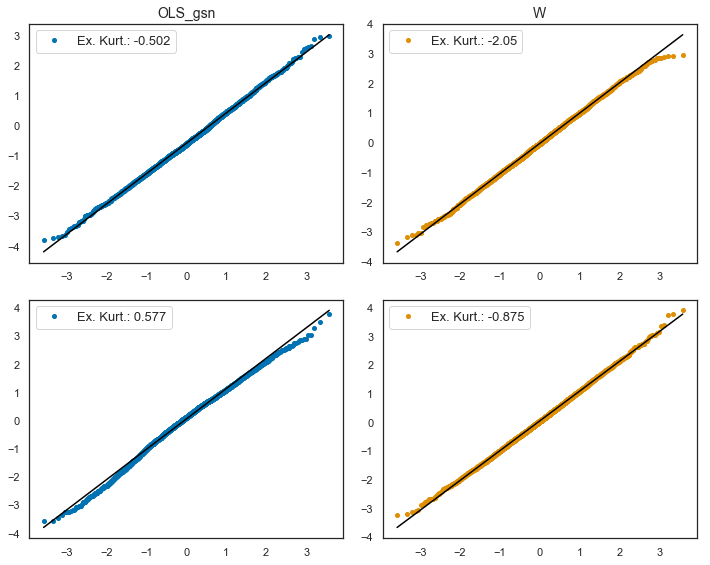}
	\caption{
Lower (Top left) and upper (Top right) coverage probabilities for $\ols$ with Gaussian intervals, $\ols$ with concentration inequality intervals and  $\vect{W}$-decorrelated estimate intervals. 
	QQ plot with kurtosis inset (bottom right) errors in $\ols$ estimate and $W$-decorrelated estimate. Mean confidence widths (bottom left) for $\ols$, concentration and $\vect{W}$-decorrelated estimates. Error
	bars show one standard deviation.} \label{fig:ar1}
\end{figure}}{}

\subsection*{Acknowledgements}
The authors would like to thank Adel Javanmard and Lucas Janson for feedback on an
earlier version of this paper.

\ifboolexpr{togl{neuripscausal}}{
\bibliographystyle{plainnat}
\bibliography{all-bibliography}
}{}

\ifboolexpr{togl{arxiv}}{
\bibliographystyle{plainnat}
}{}

\iftoggle{icml}{
  \bibliographystyle{icml2018}
}
{
\iftoggle{icml}{\newpage}{}
\bibliography{all-bibliography}}

\iftoggle{icml}{
\clearpage
  \appendix
  \onecolumn 
\section{Proofs of main results}
\iftoggle{icml}{
\label{app:proofs}
}{}
\ifboolexpr{togl{arxiv}}
{
\label{sec:proofs}
}{}

\subsection{Proofs of Theorems \ref{thm:variance} and \ref{thm:vardom}}
The proofs of the main results rely on the following
simple lemma.

\begin{lemma}\label{lem:Wclosedform}
Consider the $\vect{W}$-estimate as defined in Algorithm
\ref{alg:decorrmethod}. Assume $\twonorm{\vect{x}_i}^2 \le C$. Then for any $i$,
\begin{align}
\norm{\id_p - \vect{W}_{i-1} \vect{X}_{i-1}}_F^2 -
\norm{\id_p - \vect{W}_i \vect{X}_{i}}^2_F &\asymp 2\lambda(n){\twonorm{\vect{w}_i}^2}
\end{align}
\end{lemma}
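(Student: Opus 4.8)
The idea is to unroll the recursion by one step and reduce the claim to an exact algebraic identity, with the $\asymp$ arising only from the boundedness $\twonorm{\vect{x}_i}^2 \le C$. First I would set $M_{i-1} := \id_p - \vect{W}_{i-1}\vect{X}_{i-1}$ and record the block identity $\vect{W}_i\vect{X}_i = \vect{W}_{i-1}\vect{X}_{i-1} + \vect{w}_i\vect{x}_i^\sT$, which follows from $\vect{W}_i = [\vect{W}_{i-1}\ \vect{w}_i]$ together with the row-block structure $\vect{X}_i = [\vect{X}_{i-1}^\sT\ \vect{x}_i]^\sT$; hence $\id_p - \vect{W}_i\vect{X}_i = M_{i-1} - \vect{w}_i\vect{x}_i^\sT$. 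Expanding the Frobenius norm and using $\norm{\vect{w}_i\vect{x}_i^\sT}_F^2 = \twonorm{\vect{w}_i}^2\twonorm{\vect{x}_i}^2$ then gives
\begin{talign}
\norm{\id_p - \vect{W}_{i-1}\vect{X}_{i-1}}_F^2 - \norm{\id_p - \vect{W}_i\vect{X}_i}_F^2 = 2\langle M_{i-1},\, \vect{w}_i\vect{x}_i^\sT\rangle_F - \twonorm{\vect{w}_i}^2\twonorm{\vect{x}_i}^2 .
\end{talign}

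Next I would evaluate the cross term using the defining update for $\vect{w}_i$. Since $\langle M_{i-1},\, \vect{w}_i\vect{x}_i^\sT\rangle_F = \Tr(M_{i-1}^\sT\vect{w}_i\vect{x}_i^\sT) = \langle M_{i-1}\vect{x}_i,\, \vect{w}_i\rangle$, and the recursion $\vect{w}_i = M_{i-1}\vect{x}_i/(\lambda + \twonorm{\vect{x}_i}^2)$ (obtained by setting the gradient of the per-step objective to zero) is equivalent to $M_{i-1}\vect{x}_i = (\lambda + \twonorm{\vect{x}_i}^2)\vect{w}_i$, the cross term equals $(\lambda + \twonorm{\vect{x}_i}^2)\twonorm{\vect{w}_i}^2$. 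Substituting yields the exact identity
\begin{talign}
\norm{\id_p - \vect{W}_{i-1}\vect{X}_{i-1}}_F^2 - \norm{\id_p - \vect{W}_i\vect{X}_i}_F^2 = \bigl(2\lambda(n) + \twonorm{\vect{x}_i}^2\bigr)\twonorm{\vect{w}_i}^2 .
\end{talign}

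Finally, since $0 \le \twonorm{\vect{x}_i}^2 \le C$ and $\lambda(n) \ge 1$ (indeed $\lambda(n)\to\infty$ in the regime of interest), the multiplicative factor $2\lambda(n) + \twonorm{\vect{x}_i}^2$ lies in $[\,2\lambda(n),\, 2\lambda(n) + C\,] = 2\lambda(n)(1+o(1))$, which gives the claimed $\asymp 2\lambda(n)\twonorm{\vect{w}_i}^2$. I do not expect any genuine obstacle: the computation is routine, and the only points requiring care are (i) matching the row/column block conventions for $\vect{X}_i$ and $\vect{W}_i$ so that $\vect{W}_i\vect{X}_i$ telescopes as stated, and (ii) tracking the denominator $\lambda + \twonorm{\vect{x}_i}^2$ from the update rule, which is precisely what turns the cross term into a clean multiple of $\twonorm{\vect{w}_i}^2$ and produces the exact identity above (of which the stated $\asymp$ is an immediate consequence).
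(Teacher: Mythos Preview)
Your proposal is correct and follows essentially the same approach as the paper: both use the block identity $\vect{W}_i\vect{X}_i = \vect{W}_{i-1}\vect{X}_{i-1} + \vect{w}_i\vect{x}_i^\sT$ together with the closed-form update $\vect{w}_i = (\id_p - \vect{W}_{i-1}\vect{X}_{i-1})\vect{x}_i/(\lambda + \twonorm{\vect{x}_i}^2)$ to derive the exact identity $(2\lambda(n) + \twonorm{\vect{x}_i}^2)\twonorm{\vect{w}_i}^2$, and then invoke the uniform bound on $\twonorm{\vect{x}_i}^2$ for the $\asymp$. If anything, you spell out the Frobenius expansion and cross-term computation more explicitly than the paper does.
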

\begin{proof}
This follows directly from the fact that
$\vect{W}_i\vect{X}_i = \vect{W}_{i-1}\vect{X}_{i-1} + \vect{w}_i\vect{x}_i^\sT$ and 
the following formula for $\vect{w}_i$:
\begin{align}
\vect{w}_i &= \frac{(\id_p - \vect{W}_{i-1}\vect{X}_{i-1})\vect{x}_i}{\lambda(n) + \twonorm{\vect{x}_i}^2}
\end{align}
which implies:
\begin{align}
\norm{\id_p - \vect{W}_{i-1} \vect{X}_{i-1}}_F^2 -
\norm{\id_p - \vect{W}_i \vect{X}_{i}}^2_F &=  (2\lambda(n) + \twonorm{\vect{x}_i}^2){\twonorm{\vect{w}_i}^2}
\end{align}
The result follows as $\twonorm{\vect{x}_i}^2$ is bounded uniformly. 
\end{proof}

We can now prove Theorems \ref{thm:variance} and \ref{thm:vardom} in a 
straightforward fashion.
\begin{proof}[Proof of Theorem \ref{thm:variance}]
We have:
\begin{align}
\Tr\{\Var(\sv)\} &= \sigma^2 \E\Big\{\sum_i \twonorm{\vect{w}_i}^2\Big\} \\
&\asymp \frac{\sigma^2}{2\lambda(n) }\Big( \norm{\id_p}_F^2 - \E\big\{\norm{\id_p - \vect{W}_n \vect{X}_n}_F^2\big\} \Big),
\end{align}
where in the second line we use Lemma \ref{lem:Wclosedform} and sum
over the telescoping series in $i$. The result follows. 
\end{proof}

\begin{proof}[Proof of Theorem \ref{thm:vardom}]

From Lemma \ref{lem:biasvardecomp}, the definition of the spectral norm $\opnorm{\cdot}$, and Cauchy-Schwarz
we have that
\begin{align}
\statictwonorm{\beta - \E\{\betahat\}}^2 
&\le \E\{ \opnorm{\id_p - \vect{W}_n\vect{X}_n}^2\}
\E\{\statictwonorm{\betahat_\ols - \beta}^2\}. 
\end{align}
Using Theorem \ref{thm:lai}, the second term is bounded by $p\sigma^2 \E\{\log\lambda_{\max}/\lambda_{\min}\}$. 
We first show that this term is at most  $p\sigma^2 \log n /\lambda(n)$, under the conditions
of Theorem \ref{thm:vardom}. First, note that
\begin{align}
 \lambda_{\max} &\le \Tr\{ \vect{X}_n\vect{X}_n^\sT\}\\
 &\le \sum_i \twonorm{\vect{x}_i}^2 
 \le C^2n.
 \end{align} 
With this and condition \cref{eq:vardomcond2}, we have that:
\begin{align}
\E\bigg\{\frac{\log(\lambda_{\max})}{\lambda_{\min}}\bigg\} &\le \E\Big(\frac{\log n}{\lambdamin(\bX_n^\sT\bX_n)} \Big) \\
&=  \frac{\log n} {\lambda(n)} + O\Big(\frac{1}{n}\Big) \\
&= O\Big(\frac{\log n}{\lambda(n)} \Big).   
\end{align}
Therefore, $\E\{\lVert \betahat_\ols - \beta\rVert^2\}= O(p\sigma^2\log n/\lambda(n))$.
By condition \cref{eq:vardomcond1} we have that the bias satisfies:
\begin{align}
\lVert \beta - \E\{\betahat\} \rVert^2
&= o\Big(\frac{p\sigma^2}{\lambda(n)}\Big).
\end{align}
On the other hand, for the variance, Theorem \ref{thm:variance} 
yields
\begin{align}
\Tr(\Var(\sv)) &= \frac{p\sigma^2}{\lambda(n)} \Big( 1 - \frac{\E\{\norm{\id_p - \vect{W}_n\bX_n}_F^2\}}{p}\Big) =\Theta\Big(\frac{p\sigma^2}{\lambda(n)}\Big).
\end{align}
provided $\E\{\norm{\id_p - \vect{W}_n\bX_n}_F^2\}/p \to 0$.
Condition \cref{eq:vardomcond1} guarantees that
\begin{align}
\frac{\E\{\norm{\id_p - \vect{W}_n\bX_n}_F^2\}}{p}
&\le \E\{\opnorm{\id_p- \vect{W}_n \bX_n}^2\} = o(1/\log n).
\end{align}
This finishes the proof.

\end{proof}

We split the proof of Proposition \ref{prop:sufficient} for
the different conditions independently in the following lemmas. 

\begin{lemma}\label{lem:biasboundexplore}
Suppose that the data collection process satisfies \cref{eq:explorecond1} and \cref{eq:explorecond2}.
Then for any $\lambda\ge 1$
we have that:
\begin{align}
\E\big\{ \norm{\id_p - \vect{W}_n\vect{X}_n}_F^2\big\}
&\le p\exp\Big( -\frac{n\mubar_n}{\lambda}\Big)
\end{align}
\end{lemma}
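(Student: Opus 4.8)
The plan is to track the matrix $\vect{M}_i := \id_p - \vect{W}_i\vect{X}_i$, for which $\vect{M}_0 = \id_p$ and $\norm{\vect{M}_n}_F^2$ is precisely the quantity to be bounded. Using the closed form $\vect{w}_i = \vect{M}_{i-1}\vect{x}_i/(\lambda + \twonorm{\vect{x}_i}^2)$ (already exploited in Lemma~\ref{lem:Wclosedform}) together with $\vect{W}_i\vect{X}_i = \vect{W}_{i-1}\vect{X}_{i-1} + \vect{w}_i\vect{x}_i^\sT$, one reads off the multiplicative recursion $\vect{M}_i = \vect{M}_{i-1}(\id_p - \vect{P}_i)$, where $\vect{P}_i := \vect{x}_i\vect{x}_i^\sT/(\lambda + \twonorm{\vect{x}_i}^2)$ satisfies $\vect{0}\mle\vect{P}_i\mle\id_p$ (its only nonzero eigenvalue is $\twonorm{\vect{x}_i}^2/(\lambda+\twonorm{\vect{x}_i}^2)<1$). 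Thus $\vect{M}_n$ is a product of the contractions $\id_p-\vect{P}_i$, mirroring the reverse-SGD picture of Section~\ref{sec:model}.

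The second step is a one-step Frobenius-norm contraction. Since $\vect{0}\mle\vect{P}_i\mle\id_p$ we have the elementary operator inequality $(\id_p-\vect{P}_i)^2 \mle \id_p-\vect{P}_i$, and hence by cyclicity of the trace $\norm{\vect{M}_i}_F^2 = \Tr\big((\id_p-\vect{P}_i)^2\,\vect{M}_{i-1}^\sT\vect{M}_{i-1}\big) \le \norm{\vect{M}_{i-1}}_F^2 - \Tr\big(\vect{P}_i\,\vect{M}_{i-1}^\sT\vect{M}_{i-1}\big)$. Now condition on $\cG_{i-1}$: by well-adaptedness (Definition~\ref{assmp:predictability}), $\vect{W}_{i-1}$ and $\vect{X}_{i-1}$, hence $\vect{M}_{i-1}$, are $\cG_{i-1}$-measurable, so $\E\{\Tr(\vect{P}_i\vect{M}_{i-1}^\sT\vect{M}_{i-1})\mid\cG_{i-1}\} = \Tr\big(\E\{\vect{P}_i\mid\cG_{i-1}\}\,\vect{M}_{i-1}^\sT\vect{M}_{i-1}\big)$. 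Inserting the exploration hypothesis \cref{eq:explorecond1}, i.e. $\E\{\vect{P}_i\mid\cG_{i-1}\}\mge \mu_n(i)\id_p/\lambda$ (the $\twonorm{\vect{x}_n}$ in the denominator there being a typo for $\twonorm{\vect{x}_i}$), and using that $A\mge B$ implies $\Tr(AC)\ge\Tr(BC)$ for any $C\mge\vect{0}$ — here $C=\vect{M}_{i-1}^\sT\vect{M}_{i-1}$ — yields $\E\{\norm{\vect{M}_i}_F^2\mid\cG_{i-1}\}\le(1-\mu_n(i)/\lambda)\norm{\vect{M}_{i-1}}_F^2$. Taking expectations and iterating over $i=1,\dots,n$, using $\norm{\vect{M}_0}_F^2=\norm{\id_p}_F^2=p$ and $1-x\le e^{-x}$, gives $\E\{\norm{\id_p-\vect{W}_n\vect{X}_n}_F^2\}\le p\prod_{i=1}^n(1-\mu_n(i)/\lambda)\le p\exp(-\tfrac1\lambda\sum_i\mu_n(i))=p\exp(-n\mubar_n/\lambda)$, which is the claim; note that \cref{eq:explorecond1} forces $\mu_n(i)/\lambda<1$, so each factor genuinely lies in $[0,1)$.

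The only real subtlety — the step I expect to be the main obstacle to state cleanly — is the positive-semidefinite bookkeeping in the conditioning argument: one must verify that $\vect{M}_{i-1}$ is $\cG_{i-1}$-measurable (so it can be pulled out of $\E\{\cdot\mid\cG_{i-1}\}$) and then justify inserting the Loewner inequality \cref{eq:explorecond1} inside a trace paired with the PSD matrix $\vect{M}_{i-1}^\sT\vect{M}_{i-1}$. Everything else — the recursion for $\vect{M}_i$, the scalar inequality $(1-\lambda_j)^2\le 1-\lambda_j$ on $[0,1]$ underlying $(\id_p-\vect{P}_i)^2\mle\id_p-\vect{P}_i$, and the telescoping/exponentiation at the end — is routine.
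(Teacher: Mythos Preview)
Your proof is correct and follows essentially the same route as the paper: both track $\vect{M}_i = \id_p - \vect{W}_i\vect{X}_i$, establish the one-step contraction $\norm{\vect{M}_i}_F^2 \le \norm{\vect{M}_{i-1}}_F^2 - \Tr(\vect{P}_i\,\vect{M}_{i-1}^\sT\vect{M}_{i-1})$, condition on $\cG_{i-1}$ to insert \cref{eq:explorecond1}, and iterate. The only cosmetic difference is that the paper reaches the contraction via the exact scalar identity $\norm{\vect{M}_{i-1}}_F^2-\norm{\vect{M}_i}_F^2=(2\lambda+\twonorm{\vect{x}_i}^2)\twonorm{\vect{w}_i}^2$ from Lemma~\ref{lem:Wclosedform}, whereas you use the operator inequality $(\id_p-\vect{P}_i)^2\mle\id_p-\vect{P}_i$; these produce the same bound.
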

\begin{proof}
Define $\vect{M}_i = \id_p - \vect{W}_{i}\vect{X}_i$. Then,
from Lemma \ref{lem:Wclosedform} and the closed form 
for $\vect{w}_i$ we have that:
\begin{align}
\norm{\vect{M}_{i-1}}_F^2 
-\norm{\vect{M}_{i}}^2_F  &= 
 \frac {2\lambda + \twonorm{\vect{x}_i}^2}{(\lambda + \twonorm{\vect{x}_i}^2)^2 }\Tr\{ \vect{M}_{i-1}\vect{x}_i\vect{x}_i^\sT \vect{M}_{i-1}^\sT \} \\
&\ge \frac{1}{\lambda + \twonorm{\vect{x}_i}^2} 
\Tr\{ \vect{M}_{i-1}\vect{x}_i\vect{x}_i^\sT \vect{M}_{i-1}^\sT \}.
\end{align}
We now take expectations conditional on $\cG_{i-1}$ on both sides. 
Observing that
$(i)$ $\vect{W}_n$,  $\vect{X}_n$ and, therefore,  $\vect{M}_n$ are well-adapted and $(ii)$ using condition \cref{eq:explorecond1}, we have
\begin{align}
\E\{\norm{\vect{M}_{i-1}}_F^2 |\cG_{i-1}\}
-\E\{\norm{\vect{M}_{i}}^2_F | \cG_{i-1}\}%
&\ge  \frac{\mu_i(n)}{\lambda} \E\{ \norm{\vect{M}_{i}}_F^2 |\cG_{i-1}\}, \\
\text{ or } \,\, \E\{\norm{\vect{M}_{i}}^2_F | \cG_{i-1}\}
&\le  
\exp \Big(\frac{-\mu_i(n)}{\lambda}\Big)\E\{\norm{\vect{M}_{i-1}}_F^2 |\cG_{i-1}\}.
\end{align}
Removing the conditioning on $\cG_{i-1}$ and iterating over $i=1, 2, \dots, n$ gives the claim.
\end{proof}

\begin{lemma}\label{lem:biasboundcommute}
If the matrices $\{\vect{x}_i\vect{x}_i^\sT\}_{i\le n}$ commute, we have
that
\begin{align}
\opnorm{\id_p - \vect{W}_n\vect{X}_n} &\le \exp\Big(-\frac{\lambda_{\min}}{\lambda}\Big)
\end{align}
\end{lemma}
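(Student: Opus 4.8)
The plan is to reduce everything to a scalar computation by simultaneous diagonalization. First I would recall from the proof of Lemma~\ref{lem:Wclosedform} that $\vect{w}_i = (\id_p - \vect{W}_{i-1}\vect{X}_{i-1})\vect{x}_i/(\lambda + \twonorm{\vect{x}_i}^2)$ and $\vect{W}_i\vect{X}_i = \vect{W}_{i-1}\vect{X}_{i-1} + \vect{w}_i\vect{x}_i^\sT$, so that, writing $\vect{M}_i := \id_p - \vect{W}_i\vect{X}_i$,
\begin{align}
\vect{M}_i &= \vect{M}_{i-1}\Bigl(\id_p - \frac{\vect{x}_i\vect{x}_i^\sT}{\lambda + \twonorm{\vect{x}_i}^2}\Bigr), \qquad \vect{M}_0 = \id_p .
\end{align}
Unrolling the recursion, $\vect{M}_n = \prod_{i=1}^n P_i$ with $P_i := \id_p - \vect{x}_i\vect{x}_i^\sT/(\lambda+\twonorm{\vect{x}_i}^2)$ (indices with $\vect{x}_i = 0$ give $P_i = \id_p$ and can be dropped).

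Next I would use the commutativity hypothesis. Since the $\vect{x}_i\vect{x}_i^\sT$ pairwise commute, so do the symmetric matrices $P_i$, and hence they share a common orthonormal eigenbasis $\vect{v}_1, \dots, \vect{v}_p$; in particular the ordering of the product defining $\vect{M}_n$ does not matter. As $\vect{x}_i\vect{x}_i^\sT$ has rank one, exactly one basis vector — say $\vect{v}_{\sigma(i)}$ — is parallel to $\vect{x}_i$ and the rest are orthogonal to it (two distinct orthonormal vectors cannot both be parallel to $\vect{x}_i$). Hence $P_i \vect{v}_a = \vect{v}_a$ for $a \neq \sigma(i)$ and $P_i\vect{v}_{\sigma(i)} = \frac{\lambda}{\lambda + \twonorm{\vect{x}_i}^2}\vect{v}_{\sigma(i)}$, so $\vect{M}_n$ is diagonal in this basis with
\begin{align}
\vect{M}_n\vect{v}_a = \Bigl(\,\prod_{i:\,\sigma(i)=a}\frac{\lambda}{\lambda + \twonorm{\vect{x}_i}^2}\Bigr)\vect{v}_a, \qquad \opnorm{\vect{M}_n} = \max_{a\le p}\;\prod_{i:\,\sigma(i)=a}\frac{\lambda}{\lambda + \twonorm{\vect{x}_i}^2}.
\end{align}

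Finally, for each fixed $a$ I would bound the scalar factor by $1 - t \le e^{-t}$ applied with $t = \twonorm{\vect{x}_i}^2/(\lambda + \twonorm{\vect{x}_i}^2)$, and then use that $\twonorm{\vect{x}_i}^2$ is uniformly bounded while $\lambda \ge 1$ to replace $\lambda + \twonorm{\vect{x}_i}^2$ by $\lambda$:
\begin{align}
\prod_{i:\,\sigma(i)=a}\frac{\lambda}{\lambda + \twonorm{\vect{x}_i}^2} &\le \exp\Bigl(-\sum_{i:\,\sigma(i)=a}\frac{\twonorm{\vect{x}_i}^2}{\lambda + \twonorm{\vect{x}_i}^2}\Bigr) \asymp \exp\Bigl(-\frac{1}{\lambda}\sum_{i:\,\sigma(i)=a}\twonorm{\vect{x}_i}^2\Bigr).
\end{align}
The exponent equals $-\frac{1}{\lambda}\<\vect{v}_a, \vect{X}_n^\sT\vect{X}_n\vect{v}_a\>$, i.e. the $a$-th eigenvalue of $\vect{X}_n^\sT\vect{X}_n$ divided by $\lambda$, which is at most $-\lambda_{\min}/\lambda$; taking the maximum over $a$ yields $\opnorm{\id_p - \vect{W}_n\vect{X}_n} \le \exp(-\lambda_{\min}/\lambda)$ (up to the same $\asymp$ convention already used in Lemma~\ref{lem:Wclosedform}). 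The only delicate point is the simultaneous-diagonalization step together with the observation that each rank-one $\vect{x}_i\vect{x}_i^\sT$ acts nontrivially on a single common eigendirection — this is exactly where commutativity is used; everything after that is a one-line scalar estimate, and the passage from $\lambda + \twonorm{\vect{x}_i}^2$ to $\lambda$ in the denominator (hidden in the $\asymp$) is the boundedness of $\twonorm{\vect{x}_i}$, as elsewhere in the paper.
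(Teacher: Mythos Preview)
Your proof is correct and follows essentially the same route as the paper: derive the product formula $\id_p - \vect{W}_n\vect{X}_n = \prod_i (\id_p - \vect{x}_i\vect{x}_i^\sT/(\lambda+\twonorm{\vect{x}_i}^2))$, exploit commutativity, and apply $1-t \le e^{-t}$. The only difference is presentational---you diagonalize explicitly and argue coordinatewise, whereas the paper packages the same computation via the matrix exponential identity $\prod_i \exp(A_i)=\exp(\sum_i A_i)$ for commuting $A_i$; your explicit flag that passing from $\lambda+\twonorm{\vect{x}_i}^2$ to $\lambda$ in the denominator requires the $\asymp$ convention (i.e., boundedness of $\twonorm{\vect{x}_i}$) is in fact more careful than the paper's own write-up, which makes that substitution silently.
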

\begin{proof}
From the closed form in Lemma \ref{lem:Wclosedform} and induction, 
we get that:
\begin{align}
\id_p - \vect{W}_n\vect{X}_n &=
\prod_{i\le n} \Big(\id_p - \frac{\vect{x}_i\vect{x}_i^\sT}{\lambda + \twonorm{\vect{x}_i}^2}\Big).
\end{align}
The scalar
equality $\exp(a + b) = \exp(a)\exp(b)$ extends to 
commuting matrices $\vect{A}, \vect{B}$. Applying this
to the terms in the product above, which commute by assumption:
\begin{align}
\id_p - \vect{W}_n\vect{X}_n &= \exp\Big[\sum_i \log\Big (\id_p - \frac{\vect{x}_i\vect{x}_i^\sT}{\lambda + \twonorm{\vect{x}_i}^2}\Big)\Big]\\
&\mle \exp\Big( -\sum_i \frac{\vect{x}_i\vect{x}_i^\sT }{\lambda}\Big),
\end{align}
using the fact that $\exp(\log(1-a)) \le \exp(-a)$. Finally, {}employing
commutativity the fact that $\lambda_{\min}$ is the minimum 
eigenvalue of $\vect{X}_n^\sT\vect{X}_n = \sum_i \vect{x}_i\vect{x}_i^\sT$, the desired result follows. 
\end{proof}

We can now prove Proposition \ref{prop:sufficient}.

\begin{proof}[Proof of Proposition \ref{prop:sufficient}]
We need to satisfy conditions \cref{eq:vardomcond1} and \cref{eq:vardomcond2}
for both the cases. 
Using either Lemma \ref{lem:biasboundexplore} or \ref{lem:biasboundcommute}, with
the appropriate choice of $\lambda(n)$ we have that
\begin{align}
\E\{ \opnorm{\id_p - \vect{W}_n\vect{X}_n}^2\} &= o(1/\log n),
\end{align}
thus obtaining condition \cref{eq:vardomcond1}. 
In fact, this can be made polynomially small with a slightly smaller 
choice for $\lambda(n)$. Condition \cref{eq:vardomcond2} only needs to be verified
for the case of Lemma \ref{lem:biasboundexplore} or condition
\cref{eq:explorecond1}. It follows from
a standard application of the matrix Azuma inequality \cite{tropp2012user}, 
the fact that $n\mubar_n \ge K\sqrt{n}$ and the fact that $\twonorm{\vect{x}_i}^2$ 
are bounded. 
\end{proof}

\subsection{Proof of Theorem \ref{thm:martingaleclt}: Central limit theorem}

It suffices to show that, for every $t>0$:
\begin{align}
\lim_{n\to\infty} \E\{ e^{\im t S_n /\sqrt{V_n}}\} - e^{-t^2/2} &=0.   
\end{align}
Let $V_{i, n} = \sum_{j\le i} \E\{X_{j, n}^2 |\cF_{j-1, n}\}$
with $V_{0, n} = 0$. Therefore $V_{n, n} = V_n$ and $\E\{X_{i, n}^2 | \cF_{i-1, n} \} = V_{i, n} - V_{i-1, n}$. 
Let us also define the following error terms
\begin{align}
\nu_{i, n}^1 &= \E \Big\{ \frac{ \abs{\E\{ X_{i, n} | \cF_{i-1, n} \} - \E\{X_{i, n} | \cF_{i-1, n}, V_n\}} }{\sqrt{V_n}}   \Big\} \\
\nu^2_{i, n} &=
\E \Big\{ \frac{ \abs{\E\{ X^2_{n, n} | \cF_{n-1, n} \} - \E\{X^2_{n, n} | \cF_{n-1, n}, V_n\}} }{{V_n}}   \Big\} \\
\nu^3_{i, n} &= \E\Big\{ \frac{\abs{X_{i, n}}^3}{V_n^{3/2}}  \Big\} \\
\nu^4_{i, n}& = \E\Big\{\frac{\sigma_{i, n}^4}{V_n^2} \Big\}. 
\end{align}
The first two are moment stability, while the latter
two show that martingale increments are small.

Using the fact that $\abs{e^{\im x} - 1 -\im x +  x^2/2 } \le x^3$ and tower property, we have:
\begin{align}
\E\{ e^{\im t S_n /\sqrt{V_n}} \} &=  \E \{   \E \{ e^{\im t S_{n-1}/\sqrt{V_n}} e^{\im t X_{n, n}/\sqrt{V_n}} | \cF_{n-1, n} V_ n \}  \} \\
&=
\E \Big\{
e^{\im t S_{n-1} /\sqrt{V_n} } \Big( 1 + \im t \frac{X_{n, n}}{\sqrt{V_n}}  - \frac{t^2 X_{n, n}^2}{ 2V_n} \Big) \Big\} + O\Big( t^3 \E\Big\{ \frac{\abs{X_{n, n}}^3}{V_n^{3/2}} |  \Big\}\Big). 
\end{align}
Considering the first term, we write using tower property:
\begin{align}
\E\{ e^{\im t S_{n-1}/\sqrt{V_n}} \frac{X_{n, n}}{\sqrt{V_n}}\}
&= \E \{  \E\{  e^{\im t S_{n-1}/\sqrt{V_n}} \frac{\E\{ X_{n, n}  | \cF_{n-1, n}, V_n\}}{\sqrt{V_n}}  \} \\
&= \E\{ e^{\im t S_{n-1}/\sqrt{V_n} } \frac{\E\{ X_{n, n} | \cF_{n-1} \}} {\sqrt{V_n}}\} + O\Big(\E \Big\{ \frac{ \abs{\E\{ X_{n, n} | \cF_{n-1, n} \} - \E\{X_{n, n} | \cF_{n-1, n}, V_n\}} }{\sqrt{V_n}}   \Big\} \Big) \\
& = O\Big(\E \Big\{ \frac{ \abs{\E\{ X_{n, n} | \cF_{n-1, n} \} - \E\{X_{n, n} | \cF_{n-1, n}, V_n\}} }{\sqrt{V_n}}   \Big\} \Big) \\
&\equiv O( \nu^1_{n, n} ).
\end{align}
In an exactly analogous fashion:
\begin{align}
\E\{ e^{\im tS_{n-1}/\sqrt{V_n}} \Big(1 - \frac{t^2X^2_{n, n}}{2V_n}\Big)\}
&= \E \{  \E\{  e^{\im tS_{n-1}/\sqrt{V_n}} \Big( 1- \frac{t^2\E\{ X^2_{n, n}  | \cF_{n-1, n}, V_n\}}{{2V_n}} \Big) \} \\
&= \E\{ e^{\im tS_{n-1}/\sqrt{V_n} } \Big( 1- \frac{ t^2\E\{ X^2_{n, n} | \cF_{n-1} \}} {{2V_n}} \Big) \} + O\Big( t^2\E \Big\{ \frac{ \abs{\E\{ X^2_{n, n} | \cF_{n-1, n} \} - \E\{X^2_{n, n} | \cF_{n-1, n}, V_n\}} }{{V_n}}   \Big\} \Big) \\
&\equiv \E\{ e^{\im tS_{n-1} /\sqrt{V_n}} e^{-t^2 \sigma_{n, n}^2 /2 V_n}   \} + + O(t^4 \E\{ \sigma_{n, n}^4/V_n^2 \}) + O(t^2\nu^2_{n, n}).
\end{align}
Using these estimates, we obtain:
\begin{align}
\E\{e^{\im tS_n /\sqrt{V_n}}\} &= \E\{ e^{\im tS_{n-1}/\sqrt{V_n} } e^{-t^2\sigma_{n, n}^2/2V_n} \} + O \big(t \nu^1_{n, n} + t^2 \nu^2_{n, n} +t^3 \nu^3_{n, n} +  t^4 \nu^4_{n, n} \big). 
\end{align}
At this point, we iterate the argument, accumulating error terms. 
The only minor difference is that we have to be 
more careful about the conditioning. 

We start with the main term on the RHS.
\begin{align}
\E\{ e^{\im t S_{n-1}/\sqrt{V_n} - t^2\sigma_{n, n}^2/2V_n }  \}
&= \E \{   e^{\im t S_{n-2}/\sqrt{V_n} - t^2\sigma_{n, n}^2/2V_n }
e^{\im t X_{n-1, n}/\sqrt{V_n}} \} \\
& = \E  \{  e^{\im t S_{n-2}/\sqrt{V_n} - t^2\sigma_{n, n}^2/2V_n }
\E\{ e^{\im t X_{n-1, n}/\sqrt{V_n}} | \cF_{n-2, n}, V_n \}   \}.
\end{align}
In the final step, we use the tower property, along with the fact
that $(S_{n-2}, \sigma_{n, n}^2, V_n) = (S_{n-2}, V_{n-1,n} - V_n, V_n)$ are all 
measurable with respect to the minimal sigma algebra containing
$V_n, \cF_{n-2, n}$. Now, since the prefactor
$e^{\im t S_{n-2}/\sqrt{V_n} - t^2\sigma_{n, n}^4/2V_n } $ is
bounded in magnitude by 1, we can follow the same steps as before. 
\begin{align}
\E  \{  e^{\im t S_{n-2}/\sqrt{V_n} - t^2\sigma_{n, n}^2/2V_n }
\E\{ e^{\im t X_{n-1, n}/\sqrt{V_n}} | \cF_{n-2, n}, V_n \}   \}
& = \E \Big\{  e^{\im t S_{n-2}/\sqrt{V_n} - t^2\sigma_{n, n}^2/2V_n }
\Big(1 +  \frac{\im t X_{n-1, n}}{\sqrt{V_n}} -\frac{t^2X_{n-1, n}^2}{2V_n} \Big) \Big\} \\&\quad+ \E\{ \frac{t^3\abs{X_{n-1, n}}^3}{V_n^{3/2}}   \} \\
&= \E\{ e^{\im t S_{n-3}/\sqrt{V_n} - t^2(\sigma_{n-1, n}^2 + \sigma_{n, n}^2)/2V_n } \} \\&\quad+ O(t \nu^1_{n-1, n} + t^2 \nu^2_{n-1, n} t^3 \E\{\abs{X_{n-1, n}}^3/V_n^{3/2} + t^4 \E\{ \sigma_{n-1, n}^4/V_n^2\} \}).
\end{align}
At this point, we iterate the argument to obtain:
\begin{align}
\E\{ e^{\im t S_n /\sqrt{V_n}}\} & = e^{-t^2/2}
+ O\Big(\sum_{i}  t\nu^1_{i, n}+  t^2\nu^2_{i, n} + t^3 \nu^3_{i, n}
+ t^4\nu^4_{i, n}\Big). 
\end{align}
Our assumptions guarantee that each of the error terms 
vanish as $n\to\infty$, yielding the desired claim.

\subsection{Commutative problems: Proof of Propostion \ref{prop:commvarianceisstable}}

Here we assume that $\vect{x}_i \in\{\bv_1, \dots\bv_p\}$, 
a set of orthogonal  vectors. First, we compute some
closed form formulae that are useful in proving Proposition \ref{prop:commvarianceisstable}.

\begin{lemma}\label{lem:commutativevarformula}
Define the sequence $A =A(i)$ as the choice of arms at time $i$, $\vect{P}_a$ denote the orthogonal projector
along the direction $\bv_a$ and $r_a=  1- \norm{\bv_a}^2/(\lambda + \norm{\bv_a}^2)$.  
We have the following:
\begin{align}
\id_p - \vect{W}_i \vect{X}_i &= \sum_{a = 1}^p  \Big (\id - \frac{\bv_a \bv_a^\sT}{\lambda(n) + \norm{\bv_a}^2} \Big)^{N_a(i)} \vect{P}_a, \\
\vect{w}_i &=  \frac{ r_a^{N_A(i-1)} }{\lambda(n) + \norm{\bv_A}^2}  \bv_A. 
\end{align}
In particular, the variance is given by:
\begin{align}
 \vect{W}_n \vect{W}_n ^\sT= \sum_{i=1}^n \vect{w}_i \vect{w}_i^\sT & =
   \sum_{a =1}^p \frac{1 - r_a^{2N_a(n-1)+1}}{(\lambda(n) + \norm{\bv_a}^2)^2 ( 1- r_a^2)}  \bv_a \bv_a^\sT. 
\end{align}
\end{lemma}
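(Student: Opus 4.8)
The plan is to exploit commutativity by diagonalizing every matrix in the orthogonal eigenbasis $\{\bv_a\}_{a\le p}$ and then evaluating a geometric series. First I would recall, exactly as in the proof of Lemma~\ref{lem:biasboundcommute}, that iterating the closed form of Lemma~\ref{lem:Wclosedform} gives $\id_p - \vect{W}_i\vect{X}_i = \prod_{j\le i}\bigl(\id_p - \vect{x}_j\vect{x}_j^\sT/(\lambda + \twonorm{\vect{x}_j}^2)\bigr)$. Writing $\vect{P}_a = \bv_a\bv_a^\sT/\norm{\bv_a}^2$ for the orthogonal projector onto $\bv_a$, the hypothesis that $\bv_1,\dots,\bv_p$ are orthogonal and nonzero makes them a basis of $\reals^p$, so $\sum_a\vect{P}_a = \id_p$ and $\vect{P}_a\vect{P}_b = \delta_{ab}\vect{P}_a$. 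The factor associated with a pull of arm $a$ is then $\id_p - \bv_a\bv_a^\sT/(\lambda + \norm{\bv_a}^2) = \sum_{b\ne a}\vect{P}_b + r_a\vect{P}_a$, where $r_a = 1 - \norm{\bv_a}^2/(\lambda+\norm{\bv_a}^2) = \lambda/(\lambda+\norm{\bv_a}^2)\in(0,1)$, matching the definition in the lemma.

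Next I would note that all these factors are simultaneously diagonal in $\{\vect{P}_a\}$, hence commute (this is precisely the structure already used in Lemma~\ref{lem:biasboundcommute}), so the product over $j\le i$ can be regrouped arm by arm; collapsing the $N_a(i)$ equal factors for each $a$ and using $\vect{P}_a\vect{P}_b=\delta_{ab}\vect{P}_a$ leaves $\sum_a r_a^{N_a(i)}\vect{P}_a$, which is the first claimed identity once one observes that $r_a^{N_a(i)}\vect{P}_a = (\id_p - \bv_a\bv_a^\sT/(\lambda(n)+\norm{\bv_a}^2))^{N_a(i)}\vect{P}_a$. For the column formula I would plug this into $\vect{w}_i = (\id_p - \vect{W}_{i-1}\vect{X}_{i-1})\vect{x}_i/(\lambda + \twonorm{\vect{x}_i}^2)$ from Lemma~\ref{lem:Wclosedform}: with $\vect{x}_i = \bv_{A(i)}$ and $\vect{P}_a\bv_{A(i)} = \delta_{a,A(i)}\bv_{A(i)}$, only the $a=A(i)$ term survives, giving $\vect{w}_i = r_{A(i)}^{N_{A(i)}(i-1)}\bv_{A(i)}/(\lambda(n)+\norm{\bv_{A(i)}}^2)$.

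Finally, for the variance I would square this, so that $\vect{w}_i\vect{w}_i^\sT = r_{A(i)}^{2N_{A(i)}(i-1)}\,\bv_{A(i)}\bv_{A(i)}^\sT/(\lambda(n)+\norm{\bv_{A(i)}}^2)^2$, sum over $i\le n$, and group by arm. As $i$ ranges over the rounds in which arm $a$ is pulled, the exponent $N_a(i-1)$ ranges over $0,1,\dots,N_a(n)-1$, so the contribution of arm $a$ is $\bigl(\sum_{k=0}^{N_a(n)-1} r_a^{2k}\bigr)\bv_a\bv_a^\sT/(\lambda(n)+\norm{\bv_a}^2)^2$; since $r_a^2<1$ the geometric sum has the stated closed form, yielding the last identity. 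The whole argument is routine linear algebra, so I do not expect a genuine obstacle: the only places to be careful are the bookkeeping of the arm counts $N_a(\cdot)$ when the product and the sum are regrouped by arm, and the (already established) point that commutativity is what legitimizes that regrouping.
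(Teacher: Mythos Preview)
Your proposal is correct and follows essentially the same route as the paper: obtain the product formula for $\id_p - \vect{W}_i\vect{X}_i$ by iterating Lemma~\ref{lem:Wclosedform}, exploit commutativity to regroup the factors arm by arm via the projectors $\vect{P}_a$, read off $\vect{w}_i$ from the closed form, and finish with a geometric series. Your bookkeeping of the exponents (ranging over $0,\dots,N_a(n)-1$) is in fact tidier than the paper's, which carries a minor indexing slip in the final geometric sum.
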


\begin{proof}[Proof of Proposition \ref{prop:commvarianceisstable}]
Below, we keep implicit the dependence of $\lambda$ on $n$, 
with the understanding that $\lambda(n)$ diverges with $n$.
By Lemma \ref{lem:commutativevarformula}, we have 
\begin{align}
 \lambda \vect{W}_n \vect{W}_n^\sT 
&= \sum_{a=1}^p  c_a   \bv_a \bv_a^\sT, \\
\text{ where }  c_a &=  \frac{\lambda (1 - r_a^{2N_a(n-1) + 1})}{ (\lambda + \norm{\bv_a}^2)^2 (1-r_a^2)}
\end{align}
Note that, $r_a = 1 - \norm{\bv_a}/\lambda + O(1/\lambda^2)$
as $\lambda=\lambda(n)$ diverges,
which implies that 
\begin{align}
\lim_{n\to \infty} \frac{\lambda(n)}{(\lambda(n) + \norm{\bv_a}^2)^2 (1- r_a^2)} &= \frac{1}{2\norm{\bv_a}^2}.  
\end{align}
Therefore $c_a \to (2\norm{\bv_a}^2)^{-1}$ in $L_1$ provided
$r_a^{2N_a(n-1)} \to 0$ in $L_1$.
To show this:
\begin{align}
r_a^{2N_a(n-1)} &= \Big( 1- \frac{\norm{\bv_a}^2}{\lambda+ \norm{\bv_a}^2}\Big)^{2N_a(n-1)}\\
&\le \exp\Big( - 2\frac{\norm{\bv_a}^2N_a(n-1)}{\lambda + \norm{\bv_a}^2} \Big)  \\
 &\le \exp\Big( - 2\frac{\norm{\bv_a}^2N_a(n) - \norm{\bv_a}^2}{\lambda + \norm{\bv_a}^2} \Big) \\
& \le \exp\Big(-2\frac{\lambdamin(\bX_n^\sT\bX_n) - C^2}{\lambda+C^2} \Big),
\end{align}
where in the last line we use the fact that $\norm{\bv_a}\le C\equiv \max_a\norm{\bv_a}$.
and:
\begin{align}
\lambda_{\min}(\bX_n^\sT\bX_n) &= \lambdamin\Big( \sum_{a=1}^p \vect{P}_a \norm{\bv_a}^2 N_a(n)   \Big) = \min_a \norm{\bv_a}^2 N_a(n).
\end{align}
Since $\lambda(n) /\lambdamin(\bX_n^\sT\bX_n) \to 0$ in probability, 
$r_a^{2N_a(n-1)}\to 0$ in probability and therefore, also
in $L_1$ using bounded convergence. 

It follows that 
\begin{align}
\lambda(n)\vect{W}_n\vect{W}_n^\sT &\stackrel{L_1}{\to}
\sum_{a=1}^p \frac{\bv_a\bv_a^\sT}{2\norm{\bv_a}^2} = 
\sum_{a=1}^p \frac{\vect{P}_a}{2} = \frac{\id_p}{2}. 
\end{align}

\end{proof}

It remains to prove Lemma \ref{lem:commutativevarformula}. 
\begin{proof}[Proof of Lemma \ref{lem:commutativevarformula}]
From Lemma \ref{lem:Wclosedform} and induction we have that
\begin{align}
\id_p - \vect{W}_i\bX_i &= \prod_{j\le i} \Big(\id - \frac{\bx_j \bx_j^\sT}{\lambda + \norm{\bx_j}^2}\Big).
\end{align}
Since the matrices $\bx_j \bx_j^\sT$ commute, we can rearrange the 
product as
\begin{align}
\id_p - \vect{W}_i\bX_i &= \prod_{a = 1}^p \prod_{j \le i } 
\Big(\id - \frac{\bx_j \bx_j^\sT}{\lambda + \norm{\bx_j}^2}\Big)^{\ind(A(j) = a)} \\
& = \prod_{a = 1}^p  \Big (\id - \frac{\bv_a \bv_a^\sT}{\lambda + \norm{\bv_a}^2} \Big)^{N_a(i)}. 
\end{align}
If $\vect{P}_a$ is the orthogonal projector along $\vect{v}_a$ (i.e., 
$\vect{P}_a = \bv_a\bv_a^\sT /\norm{\bv_a}^2$), then
\begin{align}
(\id_p - \vect{W}_i \vect{X}_i)\vect{P}_a
&= \Big(\id_p  - \frac{\bv_a \bv_a^\sT}{\lambda + \norm{\bv_a}^2} \Big)^{N_a(i)} \vect{P}_a  \\
& = r_a^{N_a(i)} \vect{P}_a .  
\end{align} 
Since $\sum_a \vect{P}_a = \id_p$, the first claim follows. 
The formula for $\vect{w}_i$ follows immediately from
this decomposition of $\id_p - \vect{W}_i \vect{X}_i$.  

For the variance, we have
\begin{align}
\sum_{i=1}^n \vect{w}_i \vect{w}_i & = \sum_{i=1}^n \frac{r_A^{2N_A(i-1)}}{(\lambda + \norm{\bv_A}^2)^2} \bv_A \bv_A^\sT\\
& = \sum_{a = 1}^p \frac{1 + r_a^2 + \dots r_a^{2N_a(n-1)}}{(\lambda + \norm{\bv_a}^2)^2}     \bv_a\bv_a^\sT \\
&= \sum_{a =1}^p \frac{1 - r_a^{2N_a(n-1)+1}}{(\lambda + \norm{\bv_a}^2)^2 ( 1- r_a^2)}  \bv_a \bv_a^\sT. 
\end{align}
\end{proof}

 \newpage
\section{Supplementary experiments}
\label{sec:supplexpts}
In Figure \ref{fig:ar1} we show the results of applying $\vect{W}$-decorrelation
to the AR(1) process:
\begin{align}
y_{i} & = \beta_{0} y_{i-1} + \eps_i. 
\end{align}
As in the main text
the we run 4000 Monte Carlo iterations of generating
a length $n=100$ time series with
$\eps_i\sim\Unif([-1, 1])$ i.i.d. and $\beta = 1$, generate
the $\ols$ estimate, our decorrelated version and plot
empirical upper tail coverage, lower tail coverage as well
as probability plots. As evidenced by Figure \ref{fig:ar2}, $\vect{W}$-decorrelation
provides estimates with  valid
empirical coverage. Simulataneously, it provides smaller widths than 
concentration inequalities, particularly in the moderate confidence regime. 
\begin{figure}[ht]
\centering
\includegraphics[width=0.32\linewidth]{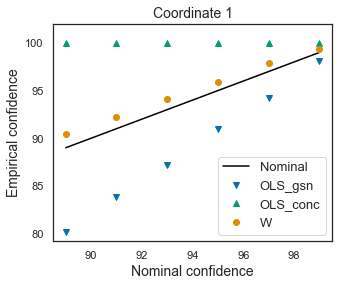}
\includegraphics[width=0.32\linewidth]{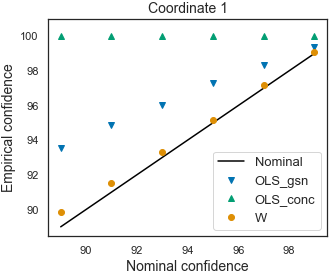}
\includegraphics[width=0.32\linewidth]{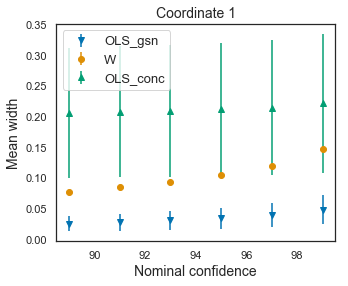}
\includegraphics[width=0.49\linewidth]{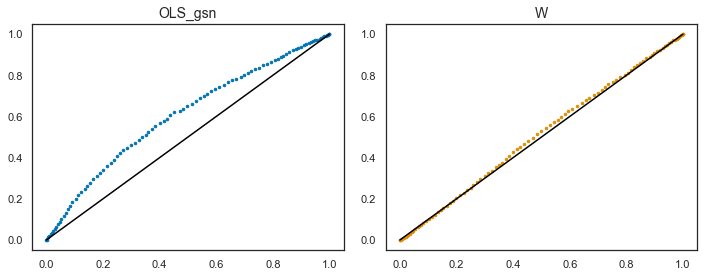} 
\includegraphics[width=0.49\linewidth]{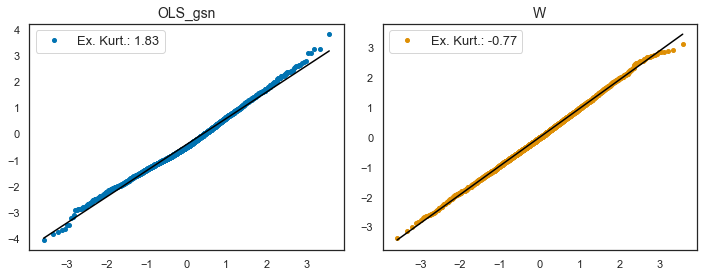}
	\caption{
Top: Lower (left) and upper (middle) coverage probabilities for 
$\ols$ with Gaussian intervals, $\ols$ with concentration inequality
 intervals, and decorrelated $\vect{W}$-decorrelated estimate
  intervals.  Note that `$\ols_{\rm conc}$' has always 100\% coverage. 
  Mean confidence widths (right) for various estimators. The 
  error bars show one (empirical) standard deviation. 
	Bottom: PP (left) and QQ (right) for the distribution of 
	errors of standard $\ols$ estimate and the $\vect{W}$-decorrelated estimate. } \label{fig:ar1}
\end{figure}

  }{}

\ifboolexpr{togl{arxiv}}{
\appendix
\numberwithin{equation}{section}

}{}

\end{document}